\definecolor{darkblue}{rgb}{0, 0, 0.5}
\runningtitle{Sequence-Level Training}
\runningauthor{Shao et al.}
\begin{document}
\title{Sequence-Level Training for Non-Autoregressive Neural Machine Translation}

\author{Chenze Shao}
\affil{Key Laboratory of Intelligent Information Processing\\Institute of Computing Technology\\Chinese Academy of Sciences\\
\texttt{shaochenze18z@ict.ac.cn}}

\author{Yang Feng}
\affil{Key Laboratory of Intelligent Information Processing\\Institute of Computing Technology\\Chinese Academy of Sciences\\
\texttt{fengyang@ict.ac.cn}}

\author{Jinchao Zhang}
\affil{Pattern Recognition Center\\ WeChat AI, Tencent Inc\\
\texttt{dayerzhang@tencent.com}}

\author{Fandong Meng}
\affil{Pattern Recognition Center\\ WeChat AI, Tencent Inc\\
\texttt{fandongmeng@tencent.com}}

\author{Jie Zhou}
\affil{Pattern Recognition Center\\ WeChat AI, Tencent Inc\\
\texttt{withtomzhou@tencent.com}}

\maketitle
\historydates{Submission received: 8 June 2021; revised version received: 18 August 2021; accepted for publication: 26 August 2021.}
\begin{abstract}
In recent years, Neural Machine Translation (NMT) has achieved notable results in various translation tasks. However, the word-by-word generation manner determined by the autoregressive mechanism leads to high translation latency of the NMT and restricts its low-latency applications. Non-Autoregressive Neural Machine Translation (NAT) removes the autoregressive mechanism and achieves significant decoding speedup through generating target words independently and simultaneously. Nevertheless, NAT still takes the word-level cross-entropy loss as the training objective, which is not optimal because the output of NAT cannot be properly evaluated due to the multimodality problem. In this article, we propose using sequence-level training objectives to train NAT models, which evaluate the NAT outputs as a whole and correlates well with the real translation quality. Firstly, we propose training NAT models to optimize sequence-level evaluation metrics (e.g., BLEU) based on several novel reinforcement algorithms customized for NAT, which outperforms the conventional method by reducing the variance of gradient estimation. Secondly, we introduce a novel training objective for NAT models, which aims to minimize the Bag-of-Ngrams (BoN) difference between the model output and the reference sentence. The BoN training objective is differentiable and can be calculated efficiently without doing any approximations. Finally, we apply a three-stage training strategy to combine these two methods to train the NAT model. We validate our approach on four translation tasks (WMT14 En$\leftrightarrow$De, WMT16 En$\leftrightarrow$Ro), which shows that our approach largely outperforms NAT baselines and achieves remarkable performance on all translation tasks. The source code is available at https://github.com/ictnlp/Seq-NAT.
\end{abstract}

\section{Introduction}
Machine translation used to be one of the most challenging tasks in natural language processing, but recent advances in neural machine translation make it possible to translate with an end-to-end model architecture. NMT models are typically built on the encoder-decoder framework. The encoder network encodes the source sentence to distributed representations, and the decoder network reconstructs the target sentence from these representations in an autoregressive manner. The target sentence is generated word-by-word where the previously predicted words are fed back to the decoder as context. In the past few years, autoregressive NMT models have achieved notable results in various translation tasks \cite{cho-etal-2014-learning,sutskever2014sequence,bahdanau2014neural,wu2016google,vaswani2017attention}. However, the word-by-word generation manner determined by the autoregressive mechanism leads to high translation latency of the NMT and restricts its low-latency applications. 

Non-Autoregressive Neural Machine Translation (NAT) \cite{gu2017non} is proposed to reduce the latency of NMT. By removing the autoregressive mechanism, NAT can generate target words independently and simultaneously, thereby achieving significant decoding speedup. Nevertheless, NAT still takes the word-level cross-entropy loss as the training objective, which is not optimal because the output of NAT cannot be properly evaluated. Due to the multimodality of language, the reference sentence may have many variants that are composed of different words but have the same semantics. For the autoregressive model, the teacher forcing algorithm \cite{williams1989learning} can provide it with sequential information that guides the model to generate the reference sentence. However, the sequential information is not available during the training of NAT, so NAT may generate any translation variant with the target semantics. Once the NAT tends to generate a variant that is not aligned verbatim with the reference sentence, the cross-entropy loss will give it a large penalty with no regard to the translation quality. Consequently, the correlation between the cross-entropy loss and translation quality becomes weak, which has a negative impact on the NAT performance.

As shown in Figure \ref{fig:overcorrection}, though the translation ``I have to get up and start working.'' has similar semantics to the reference sentence, the word-level cross-entropy loss will give it a large penalty since it is not aligned verbatim with the reference sentence. Under the guidance of cross-entropy loss, the translation may be further corrected to ``I have to up up start start working.''. This is preferred by the cross-entropy loss but the translation quality will actually get worse, which is named as the overcorrection error~\cite{zhang-etal-2019-bridging}. The essential reason for the overcorrection error is that the loss function evaluates the generation quality of each position independently and does not model the sequential dependency. As a result, NAT tends to focus on local correctness while ignoring the overall translation quality, and therefore generates influent translations with many over-translation and under-translation errors. As shown in Table \ref{tab:case}, the output of NAT is incomplete and contains repeated words like `cancer' and `aggressive'.
\begin{figure}[ht]
  \begin{center}
    \includegraphics[width=.8\columnwidth]{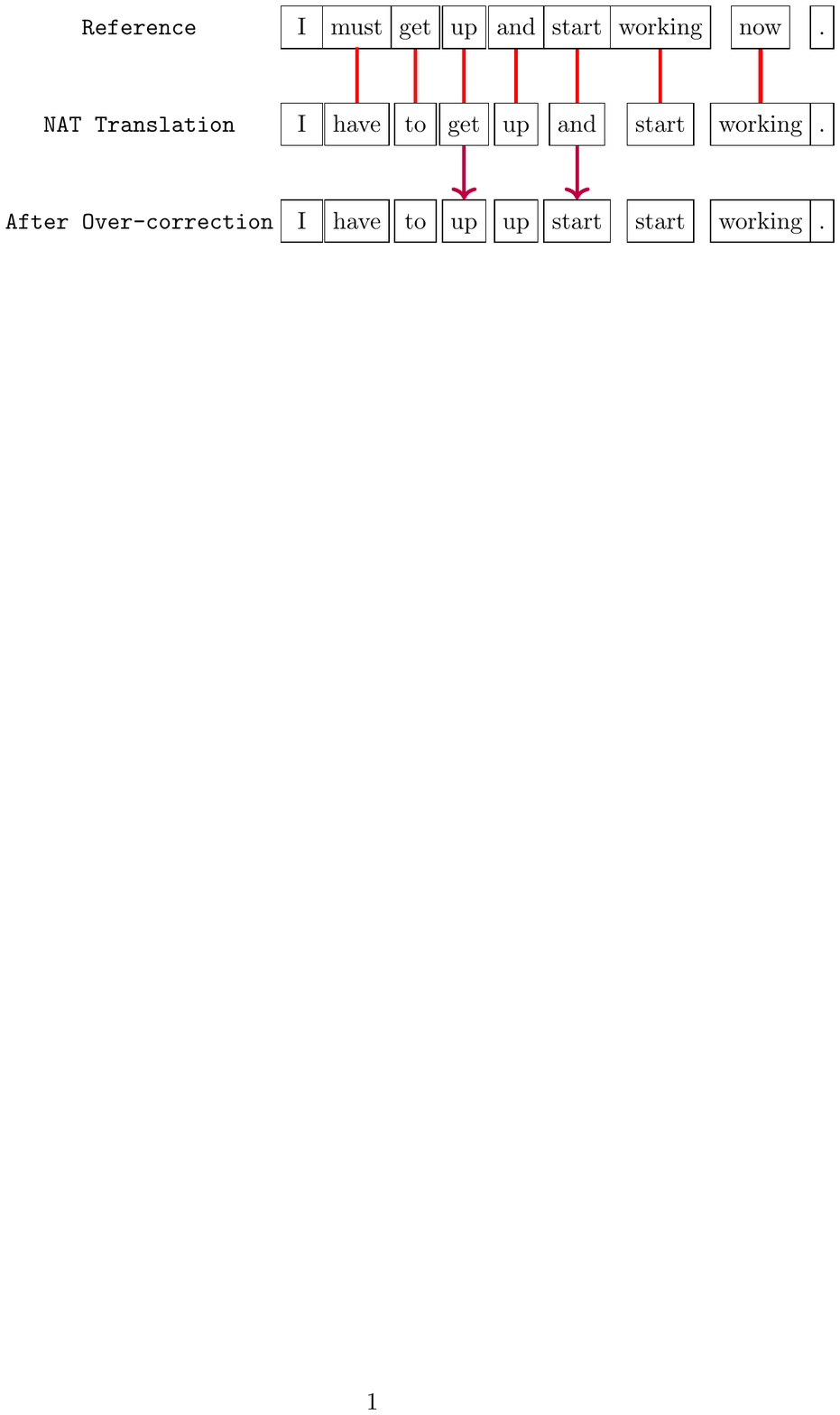}
    \caption{One example of NAT translation that is not aligned verbatim with the reference sentence. The red line indicates the misalignment that will receive a large penalty from the word-level cross-entropy loss. The purple arrow indicates the possible overcorrection error.}
    \label{fig:overcorrection}
  \end{center}
\end{figure}
\begin{table}[ht]
\centering
%\setlength{\tabcolsep}{4pt}
%\resizebox{\textwidth}{!}{
%\vskip 0.1in
\caption{A translation case on the validation set of WMT14 De-En. Source and Target are the source sentence and reference sentence, respectively. AT and NAT are the output of the autoregressive Transformer and non-autoregressive Transformer, respectively.}
\begin{tabular}{c|l}
\toprule
{Source} & Es gibt Krebsarten , die aggressiv und andere , die indolent sind .\\
\hline
{Reference} & There are aggressive cancers and others that are indolent .\\
\hline
{AT} & There are cancers that are aggressive and others that are indolent .\\
\hline
{NAT} & There are cancers cancer aggressive aggressive others are indindent .\\
\bottomrule
\end{tabular}
\label{tab:case}
\end{table}

In this article, we propose using sequence-level training objectives to train NAT models, which evaluate the NAT outputs as a whole and correlate well with the real translation quality. Firstly, we propose training NAT models to optimize sequence-level evaluation metrics (e.g., BLEU \cite{papineni2002bleu}, GLEU \cite{wu2016google}, and ROUGE \cite{lin-2004-rouge}). These metrics are usually non-differentiable, and reinforcement learning techniques \cite{sutton1984temporal,williams1992simple,sutton2000policy} are widely applied to train autoregressive NMT to optimize these discrete objectives \cite{ranzato2015sequence,bahdanau2016actor}. However, the training procedure is usually unstable due to the high variance of the gradient estimation. Using the appealing characteristics of non-autoregressive generation, we propose several novel reinforcement algorithms customized for NAT, which outperforms the conventional method by reducing the variance of gradient estimation. Secondly, we introduce a novel training objective for NAT models, which aims to minimize the Bag-of-Ngrams (BoN) difference between the model output and the reference sentence. As the word-level loss cannot properly model the sequential dependency, we propose to evaluate the NAT output at the n-gram level. Since the output of NAT may not be aligned verbatim with the reference, we do not require the strict alignment and optimize the bag-of-ngrams for NAT. Optimizing such an objective usually faces the difficulty of the exponential search space, and we find that the difficulty can be overcome through using the characteristics of non-autoregressive generation. In summary, the BoN training objective has many appealing properties. It is differentiable and can be calculated efficiently without doing any approximations. Most importantly, the BoN objective correlates well with the overall translation quality, as we demonstrate in the experiments.

The reinforcement learning method can train NAT with any sequence-level objective, but it requires a lot of calculations on the CPU to reduce the variance of gradient estimation. The bag-of-ngrams method can efficiently calculate the BoN objective without doing any approximations, but the choice of training objectives is very limited. The cross-entropy loss also has strengths such as high-speed training and is suitable for model warmup. Therefore, we apply a three-stage training strategy to combine the two sequence-level training methods and the word-level training to train the NAT model. We validate our approach on four translation tasks (WMT14 En$\leftrightarrow$De, WMT16 En$\leftrightarrow$Ro), which shows that our approach largely outperforms NAT baselines and achieves remarkable performance on all translation tasks.

This article extends our conference papers on non-autoregressive translation \cite{shao-etal-2019-retrieving,DBLP:conf/aaai/ShaoZFMZ20} in three major directions. Firstly, we propose several novel sequence-level training algorithms in this article. In the context of reinforcement learning, we propose the traverse-based method Traverse-Ref, which practically eliminates the variance of gradient estimation and largely outperforms the best method Reinforce-Topk proposed in \citet{shao-etal-2019-retrieving}. We also propose to use bag-of-words as the training objective of NAT. The bag-of-words vector can be explicitly calculated, so it supports a variety of distance metrics such as BoW-$L_1$, BoW-$L_2$ and BoW-$Cos$ as loss functions, which enables us to analyze the performance of different distance metrics on NAT. Secondly, we explore the combination of the reinforcement learning based method and the bag-of-ngrams method and propose a three-stage training strategy to better combine their advantages. Finally, we conduct experiments on a stronger baseline model \cite{ghazvininejad2019maskpredict} and a larger batch size setting to show the effectiveness of our approach, and we also provide more detailed analysis. The article is structured as follows. We explain the vanilla non-autoregressive translation and sequence-level training in Section 2. We introduce our sequence-level training methods in Section 3. We review the related works on non-autoregressive translation and sequence-level training in Section 4. In Section 5, we introduce the experimental design, conduct experiments to evaluate the performance of our methods and conduct a series of analyses to understand the underlying key components in them. Finally, we conclude in Section 6 by summarizing the contributions of our work.

\section{Background}
\subsection{Autoregressive Neural Machine Translation}
Deep neural networks with the autoregressive encoder-decoder framework have achieved state-of-the-art results on machine translation, with different choices of architectures such as Recurrent Neural Network (RNN), Convolutional Neural Network (CNN) and Transformer. RNN-based models~\cite{bahdanau2014neural,cho-etal-2014-learning} have a sequential architecture that makes them difficult to be parallelized. CNN~\cite{gehring2017convolutional}, and self-attention~\cite{vaswani2017attention} based models have highly parallelized architectures, which solves the parallelization problem during the training. However, during the inference, the translation has to be generated word-by-word due to the autoregressive mechanism. 

Given a source sentence $\bm{X}=\{x_1, ..., x_{n}\}$ and a target sentence $\bm{Y}=\{y_1, ..., y{_T}\}$, the autoregressive NMT models the translation probability from $\bm{X}$ to $\bm{Y}$ sequentially as:
%an autoregressive neural machine translation model with parameters $\theta$ models the translation probability from $\bm{x}$ to $\bm{y}$ autoregressively:
\begin{equation}
\label{eq:auto_prob}
P(\bm{Y}|\bm{X},\theta) = \prod_{t=1}^{T}p(y_t|\bm{y_{<t}},\bm{X},\theta)
\end{equation}
where $\theta$ is a set of model parameters and $\bm{y_{<t}}=\{y_1,\cdots,y_{t-1}\}$ is the translation history. The standard training objective is the cross-entropy loss, which minimizes the negative log-likelihood as:
\begin{equation}
\begin{aligned}
\label{eq:auto_mle}
\mathcal{L}_{MLE}(\theta) = -\sum_{t=1}^{T}\log(p(y _t|\bm{ y_}{<t},\bm{X},\theta))
\end{aligned}
\end{equation}

During the training, the teacher forcing algorithm~\cite{williams1989learning} is applied, where golden target words are fed into the decoder as the translation history. During the inference, since there is no polynomial time algorithm to find the translation with the maximum likelihood, autoregressive models have to rely on decoding algorithms such as greedy search and beam search to generate the translation. The partial translation generated by the decoding algorithm is fed back to the decoder to guide the generation of the next word. The prominent feature of the autoregressive model is that it requires the target-side historical information in the decoding procedure. Therefore, target words are generated in the word-by-word style, which leads to high translation latency and restricts the application of the autoregressive model.

\subsection{Non-Autoregressive Neural Machine Translation}
Non-autoregressive neural machine translation \cite{gu2017non} is proposed to reduce the translation latency through parallelizing the decoding process. A basic NAT model takes the same encoder-decoder architecture as the Transformer, except that there is a length predictor that takes encoder states as input to predict the target length.
Given a source sentence $\bm{X}=\{x_1, ..., x_{n}\}$ and a target sentence $\bm{Y}=\{y_1, ..., y{_T}\}$, NAT models the translation probability from $\bm{X}$ to $\bm{Y}$ as:

\begin{equation}
\label{eq:nonauto_prob}
P(\bm{Y}|\bm{X},\theta) = \prod_{t=1}^{T}p_t(y_t|\bm{X},\theta)
\end{equation}
where $\theta$ is a set of model parameters and $p_t(y_t|\bm{X},\theta)$ indicates the translation probability of word $y_t$ in position $t$. The cross-entropy loss is applied to minimize the negative log-likelihood as:
\begin{equation}
\begin{aligned}
\label{eq:nonauto_mle}
\mathcal{L}_{MLE}(\theta) = -\sum_{t=1}^{T}\log(p_t(y_t|\bm{X},\theta))
\end{aligned}
\end{equation}

During the training, the target length $T$ is usually set as the reference length. During the inference, the target length $T$ is obtained from the length predictor, and then the translation of length $T$ with the maximum likelihood can be easily obtained by taking the word with the maximum likelihood at each time step:
\begin{equation}
\label{eq:argmax}
\hat{y_t} = \arg\max_{y_t}p_t(y_t|\bm{X},\theta)
\end{equation}

\subsection{Sequence-Level Training for Autoregressive NMT}
Reinforcement learning techniques \cite{sutton1984temporal,Ng1999PolicyIU,sutton2000policy} have been widely applied to improve the performance of the autoregressive NMT with sequence-level objectives \cite{ranzato2015sequence,bahdanau2016actor,wu2016google,yang-etal-2018-improving}. As sequence-level objectives are usually non-differentiable, the loss function is defined as the negative expected reward:
\begin{equation}
\mathrm{L}_\theta=- \sum_{\bm{\mathrm{Y}}=y_{1:T}}{p(\bm{\mathrm{Y}}|\bm{\mathrm{X}},\theta) \cdot r(\bm{\mathrm{Y}})} \label{eq:loss}
\end{equation}
where $\bm{\mathrm{Y}}=y_{1:T}$ denotes a possible sequence generated by the model, and $r(\bm{\mathrm{Y}})$ is the corresponding reward (e.g., BLEU) for generating the sentence $\bm{\mathrm{Y}}$. Enumerating all possible target sentences is impossible due to the exponential search space, and REINFORCE \cite{williams1992simple} gives an elegant way to estimate the gradient:
\begin{equation}
\begin{aligned}
\label{eq:auto_rf}
\nabla_{\theta}\mathrm{L}_\theta=&- \sum_{\bm{\mathrm{Y}}}p(\bm{\mathrm{Y}}|\bm{\mathrm{X}},\theta)\cdot \frac{\nabla_{\theta}p(\bm{\mathrm{Y}}|\bm{\mathrm{X}},\theta)}{p(\bm{\mathrm{Y}}|\bm{\mathrm{X}},\theta)} \cdot r(\bm{\mathrm{Y}})\\
=&- \mathop{\mathbb{E}}\limits_{\bm{\mathrm{Y}}} [\nabla_{\theta} \log(p(\bm{\mathrm{Y}}|\bm{\mathrm{X}},\theta)) \cdot r(\bm{\mathrm{Y}})]\\
=&- \mathop{\mathbb{E}}\limits_{\bm{\mathrm{Y}}} [\sum_{t=1}^{T}\nabla_{\theta} \log(p(y_t|\bm{y_{<t}},\bm{\mathrm{X}},\theta)) \cdot r(\bm{\mathrm{Y}})]
\end{aligned}
\end{equation}

Equation (\ref{eq:auto_rf}) indicates that we can obtain an unbiased estimation of the gradient through the training process summarized as follows. 
\begin{itemize}
\item Given a source sentence $\bm{X}$, sample a translation $\bm{Y}$ from $p(\bm{\mathrm{Y}}|\bm{\mathrm{X}},\theta)$.
\item Calculate the reward $r(\bm{\mathrm{Y}})$ for the sampled sentence.
\item Update the model by the estimated gradient $\nabla_{\theta} \log(p(\bm{\mathrm{Y}}|\bm{\mathrm{X}},\theta)) \cdot r(\bm{\mathrm{Y}})$
\end{itemize}

Although it has the advantage of unbiased estimation, previous investigations show that the reinforcement learning based training procedure is unstable due to its high variance of gradient estimation, which mainly comes from its two drawbacks. First, word predictions in the sentence are treated equally and receive the same reward, which ignores the fact that a bad translation is often due to translation errors in only a few positions, and predictions in other positions should not be held responsible. Second, as the reward is usually defined to be positive, the algorithm will always tend to raise the probability of the sampled sentence, leading to many inefficient parameter updates. There are solutions like reward shaping \cite{Ng1999PolicyIU} or baseline reward \cite{10.5555/647235.720252} to reduce the estimation variance. However, as the sampling cost of autoregressive models is very expensive, they will either lead to biased estimation \cite{ranzato2015sequence,bahdanau2016actor} or be time-consuming \cite{shen2016minimum,yu2017seqgan}.

\section{Sequence-Level Training for NAT}
In this article, we will present our sequence-level training methods in detail. We first discuss training methods based on the reinforcement learning, which can train NAT with any sequence-level objective that correlates well with the translation quality. We then introduce the differentiable bag-of-ngrams training objective to train NAT, which can be efficiently calculated without doing any approximations. Finally, we use a three-stage training strategy to combine the strengths of the two training methods and the word-level training.

\subsection{Reinforcement Learning}
The word-level cross-entropy loss cannot evaluate the output of NAT properly, so it is necessary to train NAT with sequence-level objectives (e.g., BLEU), which is more suitable for NAT models and correlates well with the translation quality. However, these objectives are usually discrete and therefore non-differentiable, so we propose to optimize the training objective under the reinforcement learning framework \cite{williams1992simple}. The basic idea is to define the loss function as the negative expected reward, and find an unbiased estimation of its gradient:
\begin{equation}
\begin{aligned}
\nabla_{\theta}\mathrm{L}_\theta=- \sum_{\bm{\mathrm{Y}}=y_{1:T}}\nabla_{\theta}{p(\bm{\mathrm{Y}}|\bm{\mathrm{X}},\theta) \cdot r(\bm{\mathrm{Y}})}\\
=-\sum_{\bm{\mathrm{Y}}}\nabla_{\theta}\prod_{t=1}^{T}p_t(y_t|\bm{X},\theta) \cdot r(\bm{\mathrm{Y}}) \label{eq:lossgrad}
\end{aligned}
\end{equation}

In the following, we first present the basic method \textbf{Reinforce-Base} in section \ref{rein:base}, which directly applies REINFORCE \cite{williams1992simple} to estimate Equation (\ref{eq:lossgrad}). Then we develop its improved version \textbf{Reinforce-Step} in section \ref{rein:step}, where the prediction in each time step has a unqiue step reward instead of sharing a sentence reward. \textbf{Reinforce-Topk} is further proposed in section \ref{rein:topk} to reduce the estimation variance by paying more attention to the top-ranking words, which are more important than others in the gradient estimation. Taking advantage of the fact that many reward functions are based on the comparison with the reference sentence, we finally propose \textbf{Traverse-Ref} in section \ref{rein:tr} to calculate the gradient accurately by only traversing words in the reference sentence. The time complexity of these methods is analyzed in section \ref{rein:time}.

\subsubsection{Reinforce-Base}
\label{rein:base}
%\vspace{5pt}
%\noindent{}\textbf{Reinforce-Base} 
We can put aside the characteristics of the NAT and follow autoregressive models to directly apply the REINFORCE algorithm to estimate the gradient:

\begin{equation}
\begin{aligned}
\label{eq:nonauto_base}
\nabla_{\theta}\mathrm{L}_\theta=- \mathop{\mathbb{E}}\limits_{\bm{\mathrm{Y}}} [\sum_{t=1}^{T}\nabla_{\theta} \log(p_t(y_t|\bm{\mathrm{X}},\theta)) \cdot r(\bm{\mathrm{Y}})]
\end{aligned}
\end{equation}

According to Equation (\ref{eq:nonauto_base}), the NAT model first generates the probability distribution and samples a sentence from the distribution. Then the reward of the sampled sentence is calculated to evaluate the translation quality. The loss function becomes the log-probability of the sampled sentence weighted by the reward, where the reward acts as the learning rate to enhance the learning of high-quality samples. Algorithm \ref{alg:base} describes the estimation process.

\begin{algorithm}[ht]
\caption{Reinforce-Base} 
\label{alg:base}
\hspace*{0.02in} {\bf Input:} 
probability distribution $p(\cdot|\bm{\mathrm{X}},\theta))$, length $T$\\
\hspace*{0.02in} {\bf Output:} 
the estimate of $\nabla_{\theta} \mathrm{L}_\theta$
\begin{algorithmic}[1]
\State  $\nabla_{\theta} \mathrm{L}_\theta=0$
\For{$t = 1$ {\bf{to}} $T$}
  \State sample $y_t$ from $p_t(\cdot|\bm{\mathrm{X}},\theta))$
  \State $\nabla_{\theta} \mathrm{L}_\theta$ += $-\nabla_{\theta}\log(p_t(y_t|\bm{\mathrm{X}},\theta))$
\EndFor
\State construct $\bm{Y}=\{y_1, ..., y{_T}\}$, calculate $r(\bm{\mathrm{Y}})$
\State $\nabla_{\theta} \mathrm{L}_\theta$ = $r(\bm{\mathrm{Y}}) \cdot \nabla_{\theta} \mathrm{L}_\theta$
\State \Return $\nabla_{\theta} \mathrm{L}_\theta$
\end{algorithmic}
\end{algorithm}

\subsubsection{Reinforce-Step}
\label{rein:step}
%\vspace{5pt}
%\noindent{}\textbf{Reinforce-Step} 
In Reinforce-Base, the word prediction $y_t$ in every time step $t$ receives the same sentence reward $r(\bm{\mathrm{Y}})$, ignoring the characteristic of independent generation in NAT models. Intuitively, since the word prediction in every step $t$ is independent of other steps, its reward should also only be related to the word $y_t$. Therefore, we derive the Equation (\ref{eq:lossgrad}) to the following form, where the gradient in each time step $t$ is weighted by the step reward $r(y_t)$, which is defined as the expectation of reward when the prediction in step $t$ is $y_t$:
\begin{theorem}
\begin{equation}
\begin{aligned}
\label{eq:nonauto_reduce}
\nabla_{\theta}\mathrm{L}_\theta=-\sum_{t=1}^{T}\sum_{y_t} \nabla_{\theta} p_t(y_t|\bm{\mathrm{X}},\theta) \cdot r(y_t)\\
\end{aligned}
\end{equation}
\begin{equation}
\begin{aligned}
\label{eq:reward}
\text{where}\ \ \ r(y_t) = \mathop{\mathbb{E}}\limits_{y_{1:t-1}}\mathop{\mathbb{E}}\limits_{y_{t+1:T}}r(\bm{\mathrm{Y}})
\end{aligned}
\end{equation}
\end{theorem}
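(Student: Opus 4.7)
The plan is to start from Equation~(\ref{eq:lossgrad}) and push the gradient through the product $\prod_t p_t(y_t|\bm{X},\theta)$ using the standard product rule, then exploit the fact that in NAT the joint distribution fully factorizes across positions, which lets me marginalize out all coordinates other than $y_t$ in closed form and recover exactly the step reward $r(y_t)$.

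Concretely, I would first apply the product rule to write
\begin{equation*}
\nabla_\theta \prod_{t=1}^{T} p_t(y_t|\bm{X},\theta) \;=\; \sum_{t=1}^{T} \nabla_\theta p_t(y_t|\bm{X},\theta) \, \prod_{s\neq t} p_s(y_s|\bm{X},\theta).
\end{equation*}
Substituting this into Equation~(\ref{eq:lossgrad}) and interchanging the outer sum over $\bm{Y}$ with the sum over $t$ (both are finite sums, so this is trivially justified), the expression becomes
\begin{equation*}
\nabla_\theta \mathrm{L}_\theta \;=\; -\sum_{t=1}^{T} \sum_{\bm{Y}} \nabla_\theta p_t(y_t|\bm{X},\theta) \, \prod_{s\neq t} p_s(y_s|\bm{X},\theta) \cdot r(\bm{Y}).
\end{equation*}

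Next I would split the sum over $\bm{Y}$ into an outer sum over $y_t$ and an inner sum over the remaining positions $y_{1:t-1}$ and $y_{t+1:T}$. Since $\nabla_\theta p_t(y_t|\bm{X},\theta)$ does not depend on any $y_s$ with $s\neq t$, it factors out of the inner sum, leaving
\begin{equation*}
\nabla_\theta \mathrm{L}_\theta \;=\; -\sum_{t=1}^{T} \sum_{y_t} \nabla_\theta p_t(y_t|\bm{X},\theta) \cdot \Bigl[\sum_{y_{1:t-1}} \sum_{y_{t+1:T}} \prod_{s\neq t} p_s(y_s|\bm{X},\theta) \, r(\bm{Y})\Bigr].
\end{equation*}
The bracketed quantity is precisely $\mathbb{E}_{y_{1:t-1}}\mathbb{E}_{y_{t+1:T}} r(\bm{Y})$ under the NAT factorization in Equation~(\ref{eq:nonauto_prob}), which by Equation~(\ref{eq:reward}) equals $r(y_t)$. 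Substituting this gives Equation~(\ref{eq:nonauto_reduce}).

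I do not expect any real obstacle; the entire argument is essentially an application of the product rule plus marginalization in a fully factorized model, and the step reward has been defined precisely so that the marginal collapses to it. The only point that deserves care is making sure the summation indices and the ``$\prod_{s\neq t}$'' bookkeeping line up correctly when separating $y_t$ from the rest of $\bm{Y}$; once that is written out, the identification with $r(y_t)$ is immediate.
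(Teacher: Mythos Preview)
Your proposal is correct and follows essentially the same route as the paper's own proof: apply the product rule to $\nabla_\theta \prod_t p_t$, swap the sums over $\bm{Y}$ and $t$, split the sum over $\bm{Y}$ into the $y_t$ coordinate and the rest, pull $\nabla_\theta p_t(y_t|\bm{X},\theta)$ out of the inner sum, and identify the remaining sum as $\mathbb{E}_{y_{1:t-1}}\mathbb{E}_{y_{t+1:T}} r(\bm{Y}) = r(y_t)$. The only cosmetic difference is that the paper writes $\prod_{i=1}^{t-1}\prod_{j=t+1}^{T}$ where you write $\prod_{s\neq t}$.
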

\begin{proof}
\begin{equation}
\begin{aligned}
\nabla_{\theta}\mathrm{L}_\theta=-&\sum_{\bm{\mathrm{Y}}}\nabla_{\theta}\prod_{t=1}^{T}p_t(y_t|\bm{X},\theta) \cdot r(\bm{\mathrm{Y}})\\
=-&\sum_{\bm{\mathrm{Y}}}\sum_{t=1}^{T}\nabla_{\theta}p_t(y_t|\bm{X},\theta)\cdot\prod_{i=1}^{t-1}p_i(y_i|\bm{X},\theta)\cdot\prod_{j=t+1}^{T}p_j(y_j|\bm{X},\theta)\cdot r(\bm{\mathrm{Y}})\\
=-&\sum_{t=1}^{T}\sum_{\bm{\mathrm{Y}}}\nabla_{\theta}p_t(y_t|\bm{X},\theta)\cdot\prod_{i=1}^{t-1}p_i(y_i|\bm{X},\theta)\cdot\prod_{j=t+1}^{T}p_j(y_j|\bm{X},\theta)\cdot r(\bm{\mathrm{Y}})\\
=-&\sum_{t=1}^{T}\sum_{{y_t}}\nabla_{\theta}p_t(y_t|\bm{X},\theta)\cdot\sum_{{y_{1:t-1}}}\sum_{y_{t+1:T}}\prod_{i=1}^{t-1}p_i(y_i|\bm{X},\theta)\cdot\prod_{j=t+1}^{T}p_j(y_j|\bm{X},\theta)\cdot r(\bm{\mathrm{Y}})\\
=-&\sum_{t=1}^{T}\sum_{{y_t}}\nabla_{\theta}p_t(y_t|\bm{X},\theta)\cdot\mathop{\mathbb{E}}\limits_{y_{1:t-1}}\mathop{\mathbb{E}}\limits_{y_{t+1:T}}r(\bm{\mathrm{Y}}).\\
=-&\sum_{t=1}^{T}\sum_{y_t} \nabla_{\theta} p_t(y_t|\bm{\mathrm{X}},\theta) \cdot r(y_t)
\end{aligned}
\end{equation}
\end{proof}

In Equation (\ref{eq:nonauto_reduce}), each time step receives an accurate step reward $r(y_t)$ instead of sharing a sentence reward $r(\bm{\mathrm{Y}})$. We can still apply the REINFORCE algorithm to estimate the gradient:

\begin{equation}
\begin{aligned}
\nabla_{\theta}\mathrm{L}_\theta=- \sum_{t=1}^{T}\mathop{\mathbb{E}}\limits_{y_t} [\nabla_{\theta} \log(p_t(y_t|\bm{\mathrm{X}},\theta)) \cdot r(y_t)]
\end{aligned}
\end{equation}

The estimation process of Reinforce-Step is basically the same as Reinforce-Base, except that each word prediction receives a step reward rather than the sentence reward. The step reward is defined as the expectation of reward when the prediction in step $t$ is specified, which can be estimated by Monte Carlo sampling, as illustrated in algorithm \ref{alg:reward}. Specifically, we fix the prediction $y_t$ in step $t$ and sample words of other time steps from the probability distribution $p(\cdot|\bm{\mathrm{X}},\theta))$. After obtaining the sentence, we calculate the sentence reward $r(\bm{\mathrm{Y}})$. We repeat this process for $n$ times and use the average reward to estimate the step reward $r(y_t)$. Algorithm \ref{alg:step} describes the process of Reinforce-Step. 

The idea of assigning a unique reward to each time step has similarities with the actor-critic approach in NMT \cite{ranzato2015sequence,bahdanau2016actor}, which uses a critic network to predict the expected reward $r(y_{1:t})$ after generating $t$ words. In comparison, the step reward $r(y_{t})$ in Reinforce-Step is more accurate since it does not depend on the previously generated words. Besides, due to the bias of the neural network prediction, the gradient estimation of the actor-critic approach is biased. In comparison, Reinforce-Step can obtain an unbiased estimation of the gradient.

\begin{algorithm}[ht]
\caption{Estimation of step reward $r(y_t)$} 
\label{alg:reward}
\hspace*{0.02in} {\bf Input:}
probability distribution $p(\cdot|\bm{\mathrm{X}},\theta))$, step $t$, word $y_t$, length $T$, sampling times $n$\\
\hspace*{0.02in} {\bf Output:} 
the estimate of $r(y_t)$
\begin{algorithmic}[1]
\State $r$ = $0$
\For{$i = 1$ {\bf{to}} $n$}
  \State sample ${{y}}_{1:t-1}$, ${{y}}_{t+1:T}$ from $p(\cdot|\bm{\mathrm{X}},\theta))$
  \State construct ${\bm{\mathrm{Y}}}$ = $\{{{y}}_{1:t-1}$, $y_t$, ${{y}}_{t+1:T}\}$, calculate $r(\bm{\mathrm{Y}})$
  \State $r$ += $r({\bm{\mathrm{Y}}})$
\EndFor
\State $r$ = $r/n$
\State \Return $r$
\end{algorithmic}
\end{algorithm}
\begin{algorithm}[ht]
\caption{Reinforce-Step} 
\label{alg:step}
\hspace*{0.02in} {\bf Input:} 
probability distribution $p(\cdot|\bm{\mathrm{X}},\theta))$, length $T$, sampling times $n$\\
\hspace*{0.02in} {\bf Output:} 
the estimate of $\nabla_{\theta} \mathrm{L}_\theta$
\begin{algorithmic}[1]
\State  $\nabla_{\theta} \mathrm{L}_\theta=0$
\For{$t = 1$ {\bf{to}} $T$}
  \State sample $y_t$ from $p_t(\cdot|\bm{\mathrm{X}},\theta))$
  \State estimate $r(y_t)$ by algorithm \ref{alg:reward} with sampling times $n$
  \State $\nabla_{\theta} \mathrm{L}_\theta$ += $- r(y_t) \cdot \nabla_{\theta} \log(p_t(y_t|\bm{\mathrm{X}},\theta))$
\EndFor
\State \Return $\nabla_{\theta} \mathrm{L}_\theta$
\end{algorithmic}
\end{algorithm}

\subsubsection{Reinforce-Topk}
\label{rein:topk}
%\vspace{5pt}
%\noindent{}\textbf{Reinforce-Topk} 
The estimation variance of Reinforce-Step can be further reduced if we can traverse the vocabulary to directly calculate the Equation (\ref{eq:nonauto_reduce}) instead of applying REINFORCE for estimation. However, this will bring a high computational cost due to the large vocabulary size. Therefore, we take a step back and only traverse a subset of the vocabulary. The subset contains important words for the gradient estimation and filters out unimportant words, so we can effectively reduce the estimation variance and meantime maintain the acceptable training speed.

The probability distribution over the target vocabulary is usually a centered distribution where the top-ranking words occupy the central part of the distribution, and the softmax layer ensures that the other words with small probabilities have small gradients. Hence the variance will be effectively reduced if we can eliminate the variance of top-ranking words. This motivates us to compute gradients of the top-ranking words accurately and estimate the rest via the REINFORCE algorithm.

Firstly, we denote the subset of target words with top-k probabilities in step $t$ as $T_k^t$. As defined in Equation (\ref{eq:def}), $P_k^t$ is the sum of probabilities in $T_k^t$, and $\tilde p$ is the normalized probability distribution after removing the words in $T_k^t$:
\begin{equation}
\label{eq:def}
P_k^t = \sum_{y_t \in T_k^t}p_t(y_t|\bm{\mathrm{X}},\theta),\ \  \tilde p_t(y_t|\bm{\mathrm{X}},\theta)= \left\{
             \begin{array}{lr}
             0, & y \in T_k^t \\
            \frac{p_t(y_t|\bm{\mathrm{X}},\theta)}{1-P_k^t}, & y \notin T_k^t
             \end{array}
\right.
\end{equation}
We can divide the gradient into two parts, and process them in different ways. For the important words in $T_k^t$, we traverse them and accumulate their gradients. For other words, we estimate their gradients by one sampling from the normalized distribution $\tilde p$, and weight the estimation by $1-P_k^t$. The following equation shows that an unbiased estimate of the gradient can be obtained in this way, which effectively reduce the estimation variance with the acceptable training cost. Algorithm \ref{alg:topk} describes the process of Reinforce-Topk.

\begin{theorem}
\begin{equation}
%\begin{aligned}
\label{eq:rfnat}
\nabla_{\theta}\mathrm{L}_\theta=-\sum_{t=1}^{T}(\sum_{y_t \in T_k^t}\nabla_{\theta} p_t(y_t|\bm{\mathrm{X}},\theta)\cdot r(y_t)+(1-P_k^t) \cdot \mathop{\mathbb{E}}_{y_t\sim \tilde p}[\nabla_{\theta} \log(p_t(y_t|\bm{\mathrm{X}},\theta)) \cdot r(y_t)])
%\end{aligned}
\end{equation}
\end{theorem}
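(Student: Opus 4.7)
The plan is to start from Equation (\ref{eq:nonauto_reduce}), which has already been established in the previous theorem and writes $\nabla_\theta \mathrm{L}_\theta$ as a double sum $-\sum_{t=1}^T\sum_{y_t} \nabla_\theta p_t(y_t|\bm{X},\theta)\cdot r(y_t)$ over time steps and vocabulary words. The entire statement to be proved concerns only how the inner vocabulary sum at a fixed $t$ is reorganized, so the time-step sum can be carried through untouched and the argument reduces to a single-step identity.

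The first step is to partition the inner sum into the top-$k$ set $T_k^t$ and its complement, giving two pieces at each time step: $\sum_{y_t\in T_k^t}\nabla_\theta p_t\cdot r(y_t)$ and $\sum_{y_t\notin T_k^t}\nabla_\theta p_t\cdot r(y_t)$. The first piece already matches the first term in Equation (\ref{eq:rfnat}) verbatim, so no further work is needed there. For the tail piece, I would apply the standard log-derivative (score function) identity $\nabla_\theta p_t = p_t\cdot \nabla_\theta \log p_t$ to rewrite the summand as $p_t(y_t|\bm{X},\theta)\cdot\nabla_\theta\log p_t(y_t|\bm{X},\theta)\cdot r(y_t)$, preparing it for conversion into an expectation.

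Next I would invoke the definition in Equation (\ref{eq:def}): for $y_t\notin T_k^t$, $p_t(y_t|\bm{X},\theta) = (1-P_k^t)\,\tilde p_t(y_t|\bm{X},\theta)$, while for $y_t\in T_k^t$ we have $\tilde p_t(y_t|\bm{X},\theta)=0$. Substituting this and pulling the scalar $(1-P_k^t)$ outside the sum converts the tail sum into $(1-P_k^t)\sum_{y_t}\tilde p_t(y_t|\bm{X},\theta)\cdot\nabla_\theta\log p_t(y_t|\bm{X},\theta)\cdot r(y_t)$, where the sum over the whole vocabulary is legitimate because $\tilde p_t$ vanishes on $T_k^t$. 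This is exactly the expectation $(1-P_k^t)\cdot\mathbb{E}_{y_t\sim\tilde p}[\nabla_\theta\log p_t(y_t|\bm{X},\theta)\cdot r(y_t)]$ appearing in Equation (\ref{eq:rfnat}). Combining the two pieces, summing over $t$, and restoring the outer minus sign completes the argument.

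There is no real obstacle here; the statement is an algebraic regrouping of an identity already proved. The only subtle point worth double-checking is the normalization convention for $\tilde p$: one must confirm that $\sum_{y_t\notin T_k^t}\tilde p_t(y_t|\bm{X},\theta) = 1$ so that $\mathbb{E}_{y_t\sim\tilde p}$ is a genuine expectation, which follows immediately from the definition of $P_k^t$. Once that is noted, the decomposition gives an unbiased estimator: the top-$k$ contribution is computed exactly, and the tail contribution is unbiasedly estimated by a single sample from $\tilde p$ scaled by $(1-P_k^t)$, which is the variance-reduction rationale motivating the formula.
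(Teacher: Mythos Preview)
Your proposal is correct and follows essentially the same route as the paper's proof: start from Equation~(\ref{eq:nonauto_reduce}), split the inner vocabulary sum into $T_k^t$ and its complement, apply the score-function identity $\nabla_\theta p_t = p_t\,\nabla_\theta\log p_t$ on the complement, substitute $p_t = (1-P_k^t)\,\tilde p_t$ from Equation~(\ref{eq:def}), and recognize the resulting sum as $(1-P_k^t)\cdot\mathbb{E}_{y_t\sim\tilde p}[\cdot]$. The paper's proof does exactly this in a four-line display, and your additional remark about checking that $\tilde p$ is a genuine probability distribution is a nice touch that the paper leaves implicit.
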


\begin{proof}
\begin{equation}
\begin{aligned}
\nabla_{\theta}\mathrm{L}_\theta=-&\sum_{t=1}^{T}\sum_{y_t} \nabla_{\theta} p_t(y_t|\bm{\mathrm{X}},\theta) \cdot r(y_t)\\
=-&\sum_{t=1}^{T}(\sum_{y_t \in T_k^t} \nabla_{\theta} p_t(y_t|\bm{\mathrm{X}},\theta) \cdot r(y_t) + \sum_{y_t \notin T_k^t} \nabla_{\theta} p_t(y_t|\bm{\mathrm{X}},\theta) \cdot r(y_t))\\
=-&\sum_{t=1}^{T}(\sum_{y_t \in T_k^t} \nabla_{\theta} p_t(y_t|\bm{\mathrm{X}},\theta) \cdot r(y_t) + (1-P_k^t) \cdot \sum_{y_t \notin T_k^t}\frac{p_t(y_t|\bm{\mathrm{X}},\theta)}{1-P_k^t}\nabla_{\theta} \log(p_t(y_t|\bm{\mathrm{X}},\theta)) \cdot r(y_t))\\
=-&\sum_{t=1}^{T}(\sum_{y_t \in T_k^t} \nabla_{\theta} p_t(y_t|\bm{\mathrm{X}},\theta) \cdot r(y_t) + (1-P_k^t) \cdot \mathop{\mathbb{E}}_{y_t\sim \tilde p}[\nabla_{\theta} \log(p_t(y_t|\bm{\mathrm{X}},\theta)) \cdot r(y_t)])
\end{aligned}
\end{equation}
\end{proof}

\begin{algorithm}[ht]
\caption{Reinforce-Topk} 
\label{alg:topk}
\hspace*{0.02in} {\bf Input:} 
probability distribution $p(\cdot|\bm{\mathrm{X}},\theta))$, topk size $k$, length $T$, sampling times $n$ \\
\hspace*{0.02in} {\bf Output:} 
the estimate of $\nabla_{\theta} \mathrm{L}_\theta$
\begin{algorithmic}[1]
\State  $\nabla_{\theta} \mathrm{L}_\theta=0$
\For{$t = 1$ {\bf{to}} $T$}
  \State $T_k^t$ = \{words ranking top-$k$ in $p_t(\cdot|\bm{\mathrm{X}},\theta))$\}
  \State $\tilde p_t = p_t$, $P_k^t=0$
  \For{$y_t$ in $T_k^t$}
    \State estimate $r(y_t)$ by algorithm \ref{alg:reward} with sampling times $n$
    \State $\nabla_{\theta} \mathrm{L}_\theta$ += $- \nabla_{\theta} p_t(y_{t}|\bm{\mathrm{X}},\theta)\cdot r(y_t)$
    \State $\tilde p_t(y_{t}|\bm{\mathrm{X}},\theta)$ = 0, $P_k$ += $p_t(y_{t}|\bm{\mathrm{X}},\theta)$
  \EndFor
  \State $\tilde p_t$ = $\tilde p_t / (1-P_k)$, sample $y_t$ from $\tilde p_t(\cdot|\bm{\mathrm{X}},\theta)$
  \State estimate $r(y_t)$ by algorithm \ref{alg:reward} with sampling times $n$
  \State $\nabla_{\theta} \mathrm{L}_\theta$ += $- (1-P_k) \cdot \nabla_{\theta} \log(p_t(y_{t}|\bm{\mathrm{X}},\theta))\cdot r(y_t)$
\EndFor
\State \Return $\nabla_{\theta} \mathrm{L}_\theta$
\end{algorithmic}
\end{algorithm}

\subsubsection{Traverse-Ref}
\label{rein:tr}
%\vspace{5pt}
%\noindent{}\textbf{Traverse-Ref}
Due to the large vocabulary size, it is costly to directly calculate the gradient according to Equation (\ref{eq:nonauto_reduce}), so we have to rely on reinforcement learning algorithms for gradient estimation. However, we find that when the reward function is \textit{Reference-Based}, it becomes possible to traverse words in the whole vocabulary and calculate their rewards in a short time, enabling us to directly calculate the gradient according to Equation (\ref{eq:nonauto_reduce}). Firstly, we call a word out-of-$\bm{\mathrm{X}}$ if it does not appear in sentence $\bm{\mathrm{X}}$, and define the Reference-Based reward as follows:
\begin{definition}
A reward function $r(\bm{\mathrm{Y}})$ is Reference-Based if it evaluates $\bm{\mathrm{Y}}$ by comparing it with a reference sentence $\hat{\bm{\mathrm{Y}}}$ and the reward do not change when we replace any out-of-$\hat{\bm{\mathrm{Y}}}$ word in $\bm{\mathrm{Y}}$ by other out-of-$\hat{\bm{\mathrm{Y}}}$ words.
\end{definition}

From the definition, we can see that many widely used reward functions are Reference-Based. For example, since words that do not appear in the reference sentence can never match the reference, the rewards based on n-gram matching (e.g., BLEU, GLEU, and ROUGE) will not be changed by replacing any out-of-reference word by other out-of-reference words, so these rewards are Reference-Based. Recall that the step reward $r(y_t)$ is defined by the expectation of reward when the prediction in step $t$ is $y_t$. According to the definition of Reference-Based, the step reward $r(y_t)$ will take the same value as long as the reward function is Reference-Based and $y_t$ does not appear in the reference sentence. Therefore, we can divide the vocabulary into two parts. For words in the reference sentence, we traverse these words and estimate their step rewards. For out-of-reference words, we only need to estimate the step reward once since they take the same value. Finally, we calculate the gradient according to Equation (\ref{eq:nonauto_reduce}). In this way, the variance caused by the reinforcement learning method is completely eliminated, so the estimation variance only comes from the estimation of the step reward. Algorithm \ref{alg:trav} describes the process of Traverse-Ref. 

Notice that Reinforce-Topk and Traverse-Ref are not applicable to sentence-level reward. Reinforce-Topk works for word-level reward since the top-k words can usually occupy a large part of probability distribution. However, considering the exponential search space, traversing top-k sentences generally does not have a great impact on gradient estimation. Traverse-Ref works for word-level reward by reducing the search space of vocabulary size $V$ to $T$ reference words and one out-of-reference word. However, for sentence reward, the search space is only reduced from $V^T$ to $(T+1)^{T}$, which is still intractable.
\begin{algorithm}[ht]
\caption{Traverse-Ref} 
\label{alg:trav}
\hspace*{0.02in} {\bf Input:} 
probability distribution $p(\cdot|\bm{\mathrm{X}},\theta))$, length $T$, sampling times $n$, target vocabulary $V$, reference sentence $\hat{\bm{\mathrm{Y}}}$\\
\hspace*{0.02in} {\bf Output:} 
the estimate of $\nabla_{\theta} \mathrm{L}_\theta$\
\begin{algorithmic}[1]
\State  $\nabla_{\theta} \mathrm{L}_\theta=0$
\State $V_{\hat{\bm{\mathrm{Y}}}}$ = \{words in the reference sentence $\hat{\bm{\mathrm{Y}}}$\}
\State randomly choose a word $w$ from $V\setminus V_{\hat{\bm{\mathrm{Y}}}}$
\For{$t = 1$ {\bf{to}} $T$}
  \For {$y_t$ in $V_{\hat{\bm{\mathrm{Y}}}}$}
    \State estimate $r(y_t)$ by algorithm \ref{alg:reward} with sampling times $n$
  \EndFor
  \State estimate $r(w)$ by algorithm \ref{alg:reward} with sampling times $n$
  \For {$y_t$ in $V\setminus V_{\hat{\bm{\mathrm{Y}}}}$}
    \State $r(y_t)$ = $r(w)$
  \EndFor
  \For {$y_t$ in $V$}
  \State $\nabla_{\theta}\mathrm{L}_\theta$ += $-\nabla_{\theta} p_t(y_t|\bm{\mathrm{X}},\theta) \cdot r(y_t)$
  \EndFor
\EndFor
\State \Return $\nabla_{\theta} \mathrm{L}_\theta$
\end{algorithmic}
\end{algorithm}

\subsubsection{Time Complexity}
\label{rein:time}
The proposed methods will not affect the time complexity on GPU, but the computational cost on CPU becomes non-negligible since we have to calculate the reward for many times. We take the calculation of the reward as a time unit and give the time complexity of the proposed methods in the Table \ref{tab:comp}. Generally, the time complexity increases as the algorithm evolves from Reinforce-Base to Traverse-Ref.
\begin{table}[ht]
\centering
\caption{Time complexity of the proposed methods on CPU. $T$ is the length of the target sentence, $n$ is the sampling times when estimating the step reward, and $k$ is the topk size in Reinforce-Topk.}
\begin{tabular}{c|c|c|c|c}
\toprule
&Reinforce-Base&Reinforce-Step&Reinforce-Topk&Traverse-Ref\\
\midrule
Time Complexity&$\mathcal{O}(1)$&$\mathcal{O}(nT)$&$\mathcal{O}(nkT)$&$\mathcal{O}(nT^2)$\\
\bottomrule
\end{tabular}
\label{tab:comp}
\end{table}

\subsection{Bag-of-Ngrams}
Reinforcement learning based methods optimize the sequence-level objective through the gradient estimation. To stabilize the training process, we make many efforts to reduce the estimation variance. However, this requires a lot of reward calculation and hence the computational cost on CPU becomes large. In this section, we introduce a novel training objective based on bag-of-ngrams for NAT, which is differentiable and can be efficiently calculated without doing any approximations.

Bag-of-Words (BoW) \cite{joachims1998text} is a widely used text representation model that discards the word order and represents a sentence as the multiset of its belonging words. Bag-of-ngrams \cite{pang-etal-2002-thumbs,li-etal-2016-weighted} is proposed to enhance the text representation by taking consecutive words (n-gram) into consideration. Besides, bag-of-ngrams also plays an important role in the evaluation of translation quality. Recall those evaluation metrics that can evaluate the translation quality well (e.g., BLEU, GLEU, and ROUGE), many of them are based on the accuracy or recall of n-grams, which basically depends on the intersection size of bag-of-ngrams. Therefore, we propose to directly train NAT to minimize the Bag-of-Ngrams (BoN) difference between the NAT output and reference. We first define the BoN of a discrete sentence by the sum of n-gram vectors with one-hot representation. Then we define the BoN of NMT by the expectation of BoN on all possible translations and give an efficient method to calculate the BoN of NAT. Finally, we give methods to calculate the BoN distance between the NAT output and reference.

\subsubsection{Definition}
%\vspace{5pt}
%\noindent{}\textbf{Definition} 
Bag-of-ngrams is the sum of vectors where each vector is the one-hot representation of an n-gram, which has the size $|V|^n$ when the vocabulary size is $|V|$. Formally, for a sentence $\bm{Y}=\{y_1, ..., y{_T}\}$, we use $\text{BoN}_{\bm{Y}}$ to denote the bag-of-ngrams of $\bm{Y}$. For an n-gram $\bm{g} = (g_1,\dots,g_n)$, we use $\text{BoN}_{\bm{Y}}(\bm{g})$ to denote the value of entry $\bm{g}$ in $\text{BoN}_{\bm{Y}}$, which is the number of occurrences of n-gram $\bm{g}$ in sentence $\bm{Y}$ and is formulized as follows:
\begin{equation}
\label{eq:bon}
\text{BoN}_{\bm{Y}}(\bm{g}) =\sum_{t=0}^{T-n}1\{y_{t+1:t+n} = \bm{g}\}
\end{equation}
where $1\{\cdot\}$ is the indicator function that takes value from $\{0,1\}$ whose value is 1 if and only if the inside condition holds. 

For a discrete sentence, our definition of BoN is consistent with previous work. However, there is no clear definition of BoN for sequence models like NMT, which model the probability distribution on the whole target space. A natural approach is to consider all possible translations and use the expected BoN to define the BoN for sequence models. For NMT with parameter $\theta$, we use $\text{BoN}_{\theta}$ to denote its bag-of-ngrams. Formally, given a source sentence $\bm{X}$, the value of entry $\bm{g}$ in $\text{BoN}_{\theta}$ is defined as follows:
\begin{equation}
\label{eq:bon_theta}
\text{BoN}_{\theta}(\bm{g}) =\sum_{\bm{Y}} P(\bm{Y}|\bm{X},\theta)\cdot \text{BoN}_{\bm{Y}}(\bm{g})
\end{equation}
\subsubsection{Efficient Calculation}
%\vspace{5pt}
%\noindent{}\textbf{Efficient Calculation}
It is unrealistic to directly calculate $\text{BoN}_{\bm{Y}}(\bm{g})$ according to Equation (\ref{eq:bon_theta}) due to the exponential search space. For autoregressive NMT, because of the conditional dependency in modeling translation probability, it is difficult to simplify the calculation without loss of accuracy. Fortunately, NAT models the translation probability in different positions independently, which enables us to divide the target sequence into subareas and analyze the BoN in each subarea without being influenced by other positions. Using this unique property of NAT, we can convert Equation (\ref{eq:bon_theta}) to the following form:
\begin{theorem}
\begin{equation}
\label{eq:bon_theta_reduce}
\text{BoN}_{\theta}(\bm{g})=\sum_{t=0}^{T-n}\prod_{i=1}^{n}p_{t+i}(y_{t+i}=g_i|\bm{X},\theta)
\end{equation}
\end{theorem}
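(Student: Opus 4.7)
The plan is to expand the expectation-style definition in Equation~(\ref{eq:bon_theta}) by plugging in the indicator form of $\text{BoN}_{\bm{Y}}(\bm{g})$ from Equation~(\ref{eq:bon}), interchanging the two finite summations (over $\bm{Y}$ and over window starts $t$), and then exploiting the NAT factorization in Equation~(\ref{eq:nonauto_prob}) to collapse all positions not constrained by the indicator. Concretely, after the swap the inner quantity becomes, for each window start $t$, a sum over all full sequences $\bm{Y}$ of $\prod_{j=1}^{T} p_j(y_j|\bm{X},\theta)$ restricted by $1\{y_{t+1:t+n}=\bm{g}\}$.

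The key algebraic step is then to split the product $\prod_{j=1}^{T} p_j(y_j|\bm{X},\theta)$ into the $n$ factors with indices in $\{t+1,\dots,t+n\}$ and the $T-n$ factors with indices outside this window. Because NAT is fully position-wise independent, the marginal $\sum_{y_j} p_j(y_j|\bm{X},\theta)=1$ for each $j$ outside the window, so those sums evaluate to $1$ and vanish from the expression. The remaining sum over $(y_{t+1},\dots,y_{t+n})$ is collapsed by the indicator, which selects the single tuple $(g_1,\dots,g_n)$, leaving exactly $\prod_{i=1}^{n} p_{t+i}(y_{t+i}=g_i|\bm{X},\theta)$. Summing over the valid window starts $t=0,\dots,T-n$ then yields the claimed identity.

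I expect no serious obstacle here: the derivation is essentially a bookkeeping exercise made possible by the conditional independence of NAT across positions, which is precisely the property the paper highlights as distinguishing it from the autoregressive case (where the $p_j$ would depend on $y_{<j}$ and the outside-window marginalization would no longer be trivial). The only point requiring mild care is confirming that the positions outside the length-$n$ window are genuinely disjoint from those inside, so that the product factorizes cleanly and the marginalization applies term-by-term, and checking that the boundary windows $t=0$ and $t=T-n$ are handled by the same bookkeeping without any special-casing.
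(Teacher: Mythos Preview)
Your proposal is correct and follows essentially the same approach as the paper's own proof: substitute the indicator definition of $\text{BoN}_{\bm{Y}}(\bm{g})$, swap the sums over $\bm{Y}$ and over window starts $t$, split $\bm{Y}$ into the in-window and out-of-window positions, marginalize the out-of-window positions to $1$ via the NAT factorization, and let the indicator collapse the in-window sum to the product $\prod_{i=1}^{n} p_{t+i}(g_i|\bm{X},\theta)$. The paper writes the split explicitly as $\sum_{\bm{Y}_{1:t}}\sum_{\bm{Y}_{t+n+1:T}}\sum_{\bm{Y}_{t+1:t+n}}$, which is exactly the bookkeeping you describe.
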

\begin{proof}
\begin{equation}
\begin{aligned}
\text{BoN}_{\theta}&(\bm{g})=\sum_{\bm{Y}}P(\bm{Y}|\bm{X},\theta)\cdot\sum_{t=0}^{T-n}1\{y_{t+1:t+n} = \bm{g}\}\\
=&\sum_{t=0}^{T-n}\sum_{\bm{Y}} P(\bm{Y}|\bm{X},\theta) \cdot 1\{y_{t+1:t+n} = \bm{g}\}\\
=&\sum_{t=0}^{T-n}\sum_{\bm{Y}_{1:t}}P(\bm{Y}_{1:t}|\bm{X},\theta)\sum_{\bm{Y}_{t+n+1:T}}P(\bm{Y}_{t+n+1:T}|\bm{X},\theta)\sum_{\bm{Y}_{t+1:t+n}} P(\bm{Y}_{t+1:t+n}|\bm{X},\theta) \cdot 1\{y_{t+1:t+n} = \bm{g}\}\\
=&\sum_{t=0}^{T-n}\sum_{\bm{Y}_{t+1:t+n}} P(\bm{Y}_{t+1:t+n}|\bm{X},\theta) \cdot 1\{y_{t+1:t+n} = \bm{g}\}\\
=&\sum_{t=0}^{T-n}\prod_{i=1}^{n}p_{t+i}(y_{t+i}=g_i|\bm{X},\theta)
\end{aligned}
\end{equation}
\end{proof}
Equation (\ref{eq:bon_theta_reduce}) gives an efficient method to calculate $\text{BoN}_{\theta}(\bm{g})$, where we slide a window on NAT output distributions to obtain all continue subareas of size $n$, and then accumulate the counts of n-gram $\bm{g}$ in all subareas. This process does not make any approximation and requires little computational effort. 

\subsubsection{Bag-of-Words Objective}
%\vspace{5pt}
%\noindent{}\textbf{Bag-of-Words Objective} 
Our objective is to minimize the BoN difference between NAT output and reference. The difference can be measured by many metrics such as the $L_1$ distance, $L_2$ distance and cosine distance. BoN is defined to be a vector of size $|V|^n$ where $|V|$ is the vocabulary size. Though we have an efficient calculation method for $\text{BoN}_{\theta}(\bm{g})$, computing the complete BoN vector for NAT is still unaffordable due to the large BoN size. The only exception is the case of $n=1$, where the bag-of-ngrams degenerates into Bag-of-Words (BoW). In this situation, we only need to sum the probability distributions in all time steps to obtain $\text{BoW}_{\theta}$ and apply distance metrics to calculate BoW distances like BoW-$L_1$, BoW-$L_2$ and BoW-$Cos$.

\subsubsection{Bag-of-Ngrams Objective}
%\vspace{5pt}
%\noindent{}\textbf{Bag-of-Ngrams Objective} 
For $n>1$, the complete BoN vector is unavailable, so many distance metrics like $L_2$ distance and cosine distance cannot be calculated.
Fortunately, we find that the $L_1$ distance between the two BoN vectors, denoted as BoN-$L_1$, can be simplified using the sparsity of bag-of-ngrams. As shown in Equation (\ref{eq:bon_theta_reduce}), for NAT, its bag-of-ngrams vector $\text{BoN}_{\theta}$ is dense. On the contrary, assume that the reference sentence is $\hat{\bm{Y}}$, the vector $\text{BoN}_{\hat{\bm{Y}}}$ is very sparse where only a few entries of it have non-zero values. Using this property, we can write BoN-$L_1$ as follows:

\begin{theorem}
\begin{equation}
\label{eq:bon_l1}
\text{BoN-}L_1=2(T-n+1-\sum_{\bm{g}}\min(\text{BoN}_{\theta}(\bm{g}),\text{BoN}_{\hat{\bm{Y}}}(\bm{g})))
\end{equation}
\end{theorem}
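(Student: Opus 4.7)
The plan is to start from the definition
\[
\text{BoN-}L_1 \;=\; \sum_{\bm{g}} \bigl|\text{BoN}_{\theta}(\bm{g}) - \text{BoN}_{\hat{\bm{Y}}}(\bm{g})\bigr|
\]
and then invoke the elementary identity $|a-b| = a + b - 2\min(a,b)$, which holds for any pair of non-negative reals. Both $\text{BoN}_{\theta}(\bm{g})$ and $\text{BoN}_{\hat{\bm{Y}}}(\bm{g})$ are non-negative by construction (one is a count, the other an expected count), so this identity applies term-by-term and gives
\[
\text{BoN-}L_1 \;=\; \sum_{\bm{g}} \text{BoN}_{\theta}(\bm{g}) \;+\; \sum_{\bm{g}} \text{BoN}_{\hat{\bm{Y}}}(\bm{g}) \;-\; 2\sum_{\bm{g}} \min\bigl(\text{BoN}_{\theta}(\bm{g}),\,\text{BoN}_{\hat{\bm{Y}}}(\bm{g})\bigr).
\]

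The next step is to compute the two total-mass sums. For the reference, $\sum_{\bm{g}} \text{BoN}_{\hat{\bm{Y}}}(\bm{g}) = T - n + 1$ is immediate from the definition in Equation~(\ref{eq:bon}): a sentence of length $T$ contains exactly $T-n+1$ n-gram occurrences. For the NAT model, I would use Equation~(\ref{eq:bon_theta_reduce}): summing that formula over all $\bm{g}\in V^n$ and swapping the order of summation, $\sum_{\bm{g}} \prod_{i=1}^{n} p_{t+i}(y_{t+i}{=}g_i\mid\bm{X},\theta)$ factorizes across positions and each factor sums to $1$, so each window contributes $1$ and the total is again $T-n+1$.

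Plugging both mass sums into the previous display collapses the expression to the claimed identity
\[
\text{BoN-}L_1 \;=\; 2\Bigl(T - n + 1 \;-\; \sum_{\bm{g}} \min\bigl(\text{BoN}_{\theta}(\bm{g}),\,\text{BoN}_{\hat{\bm{Y}}}(\bm{g})\bigr)\Bigr).
\]
There is no genuine obstacle here; the only thing to verify carefully is the interchange of summations when evaluating $\sum_{\bm{g}}\text{BoN}_{\theta}(\bm{g})$, which is legitimate because the double sum has finitely many non-negative terms. The real payoff of the identity is algorithmic rather than analytical: although the ambient vector space has dimension $|V|^n$, the sparsity of $\text{BoN}_{\hat{\bm{Y}}}$ means $\min(\text{BoN}_{\theta}(\bm{g}),\text{BoN}_{\hat{\bm{Y}}}(\bm{g}))$ vanishes outside the at most $T-n+1$ distinct n-grams appearing in $\hat{\bm{Y}}$, so the remaining sum can be enumerated explicitly and each required value $\text{BoN}_{\theta}(\bm{g})$ is computed in $O(n(T-n+1))$ time via Equation~(\ref{eq:bon_theta_reduce}).
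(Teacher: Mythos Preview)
Your proof is correct and follows essentially the same approach as the paper: apply the identity $|a-b|=a+b-2\min(a,b)$ term-by-term and then show that both total masses $\sum_{\bm{g}}\text{BoN}_{\theta}(\bm{g})$ and $\sum_{\bm{g}}\text{BoN}_{\hat{\bm{Y}}}(\bm{g})$ equal $T-n+1$. The only cosmetic difference is that the paper computes $\sum_{\bm{g}}\text{BoN}_{\theta}(\bm{g})$ by going back to the expectation definition~(\ref{eq:bon_theta}) and using linearity of expectation, whereas you compute it from the product formula~(\ref{eq:bon_theta_reduce}) via factorization; both arguments are equally short and yield the same result.
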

\begin{proof}
First, we show that the $L_1$-norm of $\text{BoN}_{\bm{Y}}$ and $\text{BoN}_{\theta}$ are both $T-n+1$:
\begin{equation}
\begin{aligned}
\label{eq:l1-norm}
\sum_{\bm{g}}\text{BoN}&_{\bm{Y}}(\bm{g})=\sum_{t=0}^{T-n}\sum_{\bm{g}}1\{y_{t+1:t+n} = \bm{g}\}=T-n+1\\
\sum_{\bm{g}}\text{BoN}&_{\theta}(\bm{g})=\sum_{\bm{g}}\sum_{\bm{Y}} P(\bm{Y}|\bm{X},\theta)\cdot \text{BoN}_{\bm{Y}}(\bm{g})\\
&=\sum_{\bm{Y}} P(\bm{Y}|\bm{X},\theta)\cdot \sum_{\bm{g}}\text{BoN}_{\bm{Y}}(\bm{g})=T-n+1
\end{aligned}
\end{equation}
On this basis, we can convert BoN-$L_1$ to the following form:
\begin{equation}
\begin{aligned}
\text{BoN-}&L_1=\sum_{\bm{g}}\vert\text{BoN}_{\theta}(\bm{g})-\text{BoN}_{\hat{\bm{Y}}}(\bm{g})\vert\\
&=\sum_{\bm{g}}(\text{BoN}_{\theta}(\bm{g})+\text{BoN}_{\hat{\bm{Y}}}(\bm{g})-2\min(\text{BoN}_{\theta}(\bm{g}),\text{BoN}_{\hat{\bm{Y}}}(\bm{g}))\\
&=2(T-n+1-\sum_{\bm{g}}\min(\text{BoN}_{\theta}(\bm{g}),\text{BoN}_{\hat{\bm{Y}}}(\bm{g})))
\end{aligned}
\end{equation}
\end{proof}
The minimum between $\text{BoN}_{\theta}(\bm{g})$ and $\text{BoN}_{\hat{\bm{Y}}}(\bm{g})$ can be understood as the number of matches for the n-gram $\bm{g}$, and the $L_1$ distance measures the number of n-grams predicted by NAT that fails to match the reference sentence. Notice that the minimum will be nonzero only if the n-gram $\bm{g}$ appears in the reference sentence. Hence we can only focus on n-grams in the reference, which significantly reduces the computational effort and storage requirement. Algorithm \ref{alg:bon} illustrates the calculation process of $\text{BoN-}L_1$.
\begin{algorithm}[ht]
\caption{BoN-$L_1$} 
\label{alg:bon}
\hspace*{0.02in} {\bf Input:} 
probability distribution $p(\cdot|\bm{\mathrm{X}},\theta))$, reference sentence $\hat{\bm{Y}}$, prediction length $T$, $n$ \\
\hspace*{0.02in} {\bf Output:} 
BoN distance BoN-$L_1$
\begin{algorithmic}[1]
\State construct the bag-of-ngrams $\text{BoN}_{\hat{\bm{Y}}}$ for the reference sentence
\State ref-ngrams=\{$\bm{g}|\text{BoN}_{\hat{\bm{Y}}}(\bm{g})$ != $0$\}
\State match = $0$
\For{$\bm{g}$ in ref-ngrams}
  \State calculate $\text{BoN}_{\theta}(\bm{g})$ according to Equation (\ref{eq:bon_theta_reduce})
  \State match +=  $\min(\text{BoN}_{\theta}(\bm{g}),\text{BoN}_{\hat{\bm{Y}}}(\bm{g}))$
\EndFor
\State BoN-$L_1$ = $2$ $\cdot$ ($T$-$n$+$1$-match)
\State \Return BoN-$L_1$
\end{algorithmic}
\end{algorithm}

We normalize distances to range $[0,1]$ as training objectives. For BoW distances, we keep BoW-$Cos$ unchanged and divide BoW-$L_1$ and BoW-$L_2$ by the constant $2T$:
\begin{equation}
\label{eq:bownorm}
\mathcal{L}_{\text{BoW-}{L_1}}(\theta) = \frac{\text{BoW-}L_1}{2T}\quad \quad \mathcal{L}_{\text{BoW-}{L_2}}(\theta) = \frac{\text{BoW-}L_2}{2T}
\end{equation}
For BoN, we divide BoN-$L_1$ by the constant $2(T-n+1)$:
\begin{equation}
\label{eq:bonnorm}
\mathcal{L}_{\text{BoN-}{L_1}}(\theta) = \frac{\text{BoN-}L_1}{2(T-n+1)}
\end{equation}

\subsection{Training Strategy}
The reinforcement learning method can train NAT with any sequence-level objective that correlates well with the translation quality, but it requires a lot of calculations on CPU to reduce the variance of gradient estimation. The bag-of-ngrams method can efficiently calculate the BoN objective without doing any approximations, but the training objective is limited to the $L_1$ distance. The word-level cross-entropy loss cannot evaluate the output of NAT properly, but it also has strengths like high-speed training and it is suitable for model warmup.

Therefore, we propose to use a three-stage training strategy to combine the strengths of the two training methods and the cross-entropy loss. Firstly, we use the cross-entropy loss to pretrain the NAT model, and this process takes the most training steps. Then we use the bag-of-ngrams objective to finetune the pretrained model for a few training steps. Finally, we apply the reinforcement learning method to finetune the model to optimize the sequence-level objective, where this process takes the least training steps. There are also other training strategies like two-stage training and joint training, and we will show the efficiency of three-stage training in the experiment. The loss based on reinforcement learning or bag-of-ngrams can also be used alone to finetune the model pretrained by the cross-entropy loss. We will adopt this strategy when analyzing these methods separately.

\section{Related Work}
\subsection{Non-Autoregressive Translation}
\citet{gu2017non} proposed non-autoregressive translation to reduce the translation latency, which generates all target tokens simultaneously. Although accelerating the decoding process significantly, the acceleration comes at the cost of translation quality. Therefore, intensive efforts have been devoted to improving the performance of NAT, which can be roughly divided into the following categories.

\vspace{5pt}
\noindent{}\textbf{Latent Variables.} NAT suffers from the multimodality problem, which can be mitigated by introducing a latent variable to directly model the nondeterminism in the translation process. \citet{gu2017non} proposed to use fertility scores specifying the number of output words each input word generates to model the latent variable. \citet{kaiser2018fast} autoencoded the target sequence into a sequence of discrete latent variables and decoded the output sequence from the latent sequence in parallel. Based on variational inference, \citet{Ma_2019} proposed FlowSeq to model sequence-to-sequence generation using generative flow, and \citet{Shu2020LatentVariableNN} introduced LaNMT with continuous latent variables and deterministic inference. \citet{bao2019nonautoregressive,ran2019guiding} used the position information as latent variables to explicitly model the reordering information in the decoding procedure. 

\vspace{5pt}
\noindent{}\textbf{Decoding Methods.} The fully non-autoregressive transformer generates all target words in one run, which suffers from the large performance degradation. Therefore, researchers were interested in alternative decoding methods that are slightly slower but can significantly improve the translation quality. \citet{lee2018deterministic} proposed the iterative decoding based NAT model IRNAT to iteratively refine the translation where the outputs of the decoder are fed back as the decoder inputs in the next iteration. The pattern of iterative decoding was adopted by many non-autoregressive models. \citet{ghazvininejad2019maskpredict,Kasai2020NonautoregressiveMT} refine model output iteratively by masking part of the translation and predicting the masks in each iteration. \citet{gu2019levenshtein} introduced the Levenshtein Transformer to iteratively refine the translation with insertion and deletion operations. In addition to iterative decoding, \citet{NIPS2019_8566} proposed to incorporate the Conditional Random Fields in the top of NAT decoder to help the NAT decoding. \citet{wang2018semi} introduced the semi-autoregressive decoding mechanism that generates a group of words each time. \citet{ran-etal-2020-learning} proposed another semi-autoregressive model named RecoverSAT, which generates a translation as a sequence of simultaneously generated segments.

\vspace{5pt}
\noindent{}\textbf{Training Objectives.} As the cross-entropy loss can not evaluate NAT outputs properly, researchers attempt to improve the model performance by introducing better training objectives. In addition to sequence-level training \cite{shao-etal-2019-retrieving,DBLP:conf/aaai/ShaoZFMZ20}, \citet{wang2019non} proposed the similarity regularization and reconstruction regularization to reduce errors of repeated and incomplete translations. \citet{libovicky2018end,saharia-etal-2020-non} applied the Connectionist Temporal Classification loss to marginalizes out latent alignments using dynamic programming. \citet{Aligned} proposed the aligned cross-entropy loss, which uses a differentiable dynamic program based on the best monotonic alignment between target tokens and model predictions.

\vspace{5pt}
\noindent{}\textbf{Other Improvements.} Besides the above mentioned categories, some researchers improve the NAT performance from other perspectives. \citet{guo2019non} proposed to enhance the inputs of NAT decoder with phrase-table lookup and embedding mapping. \citet{akoury-etal-2019-syntactically} introduced syntactically supervised Transformers, which first autoregressively predicts a chunked parse tree and then generate all target tokens conditioned on it. \citet{zhou-keung-2020-improving} proposed to improve NAT performance with source-side monolingual data. \citet{shan2021modeling} proposed to model the coverage information for NAT. \citet{li-etal-2019-hint,wei-etal-2019-imitation} improve the performance of NAT by exploring better methods to learn from autoregressive models. \citet{Zhou2020Understanding} investigated the knowledge distillation technique in NAT. \citet{tu-etal-2020-engine} introduced the energy-based inference networks as an alternative to knowledge distillation.

\subsection{Sequence-Level Training for Autoregressive NMT}
Neural machine translation models are usually trained with the word-level loss under the teacher forcing algorithm \cite{williams1989learning}, which forces the model to generate the next word based on the previous ground-truth words other than the model outputs during the training. However, this training method suffers from the {\em exposure bias} problem \cite{ranzato2015sequence} since the model is exposed to different data distributions during training and inference. To alleviate the exposure bias problem, some researchers improve the teacher forcing algorithm to professor forcing \cite{professor} or seer forcing \cite{feng2021guiding}. Scheduled sampling \cite{bengio2015scheduled,venkatraman2015improving} is the direct solution for exposure bias, which attempts to alleviate the exposure bias problem through mixing ground-truth words and previously predicted words as inputs during the training. However, the generated sequence may not be aligned with the target sequence, which is inconsistent with the word-level loss. Therefore, it is a natural solution to apply sequence-level training to eliminate the exposure bias in the autoregressive NMT.

Sequence-level training objectives are usually non-differentiable, and reinforcement learning techniques \cite{williams1992simple,sutton2000policy} are widely applied to train autoregressive NMT to optimize discrete objectives. \citet{ranzato2015sequence} first pointed out the exposure bias problem and proposed the MIXER algorithm to alleviate the exposure bias, which is a combination of the word-level cross-entropy loss and the sequence-level loss optimized by the REINFORCE algorithm. \citet{bahdanau2016actor} presented an approach to training neural networks to generate sequences using actor-critic methods from reinforcement learning. 
\citet{he2016dual} proposed a dual learning approach to train the forward NMT model using reward signals provided by the backward model. \citet{wu2016google} introduced a new sequence evaluation metrics GLEU for the sequence-level training of Google's NMT System. \citet{yu2017seqgan} proposed a sequence generation framework called SeqGAN to overcome the differentiable difficulty of GAN through reinforcement learning, which is then applied by \citet{pmlr-v95-wu18a,yang-etal-2018-improving} to train NMT under the generator-discriminator framework. \citet{wu-etal-2018-study} conducted a systematic study on the reinforcement learning based training method for NMT. 

In addition to reinforcement learning based methods, there are also some approaches that can train NMT with sequence-level objectives. \citet{shen2016minimum} 
introduced Minimum Risk Training for NMT to minimize the expected risk on training data. \citet{10.5555/3157096.3157290} proposed Reward Augmented Maximum Likelihood to incorporate sequence-level reward into a maximum likelihood framework. \citet{edunov-etal-2018-classical} surveyed a range of classical objective functions and applied them to neural sequence to sequence models. \citet{ma2018bag} proposed to optimize NMT by the bag-of-words training objective. \citet{shao2018greedy} introduced probabilistic n-gram matching to transform the discrete sequence-level objective into the differentiable form.

As shown above, sequence-level training has attracted much attention of researchers and has been deeply studied on autoregressive models. However, though sequence-level training is more essential on non-autoregressive models, its application on NAT has not been well studied before.

\section{Experiments}
\subsection{Setup}
\vspace{5pt}
\noindent{}\textbf{Datasets.} We evaluate our proposed methods on four translation tasks: WMT14 English$\leftrightarrow$German (WMT14 En$\leftrightarrow$De) and WMT16 English$\leftrightarrow$Romanian (WMT16 En$\leftrightarrow$Ro). We use the standard tokenized BLEU ~\cite{papineni2002bleu} to evaluate the translation quality. For WMT14 En$\leftrightarrow$De, we use the WMT 2016 corpus consisting of 4.5M sentence pairs for the training. The validation set is \texttt{newstest2013} and the test set is \texttt{newstest2014}. We learn a joint BPE model \cite{sennrich-etal-2016-neural} with 32K operations to process the data and share the vocabulary for source and target languages. For WMT16 En$\leftrightarrow$Ro, we use the WMT 2016 corpus consisting of 610K sentence pairs for the training. We take \texttt{newsdev-2016} and \texttt{newstest-2016} as development and test sets. We learn a joint BPE model with 32K operations to process the data and share the vocabulary for source and target languages. 

\vspace{5pt}
\noindent{}\textbf{Knowledge Distillation.} Knowledge distillation \cite{hinton2015distilling,kim-rush-2016-sequence} is proved to be crucial for successfully training NAT models. For all the translation tasks, we first train an autoregressive model as the teacher and apply sequence-level knowledge distillation to construct the distillation corpus where the target side of the training corpus is replaced by the output of the autoregressive Transformer model. We use the distillation corpora to train NAT models. 

\vspace{5pt}
\noindent{}\textbf{Baselines.} We take the base version of Transformer \cite{vaswani2017attention} as our autoregressive baseline as well as the teacher model. The NAT baseline takes the same structure as the base Transformer except that we modify the attention mask of the decoder, so it does not mask out the future tokens. 
We perform uniform copy from source embeddings \cite{gu2017non} to construct decoder inputs. We use a target length predictor to predict the length of the target sentence, which takes the encoder hidden states as inputs and feeds it to a softmax classifier after an affine transformation. We use the golden length during the training and the predicted length during the inference. 

\vspace{5pt}
\noindent{}\textbf{Rescoring.} For inference, we follow the common practice of noisy parallel decoding \cite{gu2017non}, which generates a number of decoding candidates in parallel and selects the best translation via rescoring with the autoregressive teacher. We generate multiple translation candidates by predicting the target length $T$ and generate translations with lengths ranging from $[T-B,T+B]$, where $B$ is the beam size. The autoregressive teacher calculates the cross-entropy loss of the $n=2B+1$ translations and selects the translation with the lowest loss.

\vspace{5pt}
\noindent{}\textbf{Hyperparameters.} In the main experiments, we set the topk size $k$ to $5$, the sampling times $n$ to $10$ and the $N$ in Bag-of-Ngrams to 2. We use the ROUGE-2 score as the reward in reinforcement learning. In the pretraining stage of the three-stage training, the number of training steps is $300k$ for WMT14 En$\leftrightarrow$De and $150k$ for WMT16 En$\leftrightarrow$Ro. In the second stage, we use the BoN objective to finetune the model for $3k$ steps. In the final stage, we use sequence-level evaulation metrics to finetune the model for $300$ steps. The batch size is $128k$ for pretraining and $512k$ for finetuning. For WMT14 En$\leftrightarrow$De, we use a dropout rate of $0.3$ during the pretraining and a dropout rate of $0.1$ during the finetuneing. For WMT16 En$\leftrightarrow$Ro, we use a dropout rate of $0.3$ during the pretraining and finetuneing. We also use $0.01$ $L_2$ weight decay and label smoothing with $\epsilon=0.1$ for regularization. We follow the weight initialization schema from BERT \cite{devlin-etal-2019-bert}. All models are optimized with Adam \cite{DBLP:journals/corr/KingmaB14} with $\beta=(0.9,0.98)$ and $\epsilon=10^{-8}$. The learning rate warms up to $5\cdot10^{-4}$ within $10k$ steps, and then decays with the inverse square-root schedule. We use $8$ GeForce RTX $3090$ GPUs for the training.

\subsection{Main Results}
\begin{table}[t]
\centering
\caption{The performance (test set BLEU) of our methods on all of our benchmarks. All models except Transformer are purely non-autoregressive, using a single forward pass during the argmax decoding. NAT-Base is the baseline NAT model. All other NAT models are finetuned from the NAT-Base.}
\begin{tabular}{clccccc}
\toprule
&\multirow{2}{*}{\textbf{Model}} & \multirow{2}{*}{\textbf{Speedup}} & \multicolumn{2}{c}{\textbf{WMT14}} & \multicolumn{2}{c}{\textbf{WMT16}} \\ 
&&& \textbf{EN-DE} & \textbf{DE-EN} & \textbf{EN-RO} & \textbf{RO-EN}  \\\midrule 
\multirow{2}{*}{Base}
&Transformer & 1.0$\times$&  27.42 & 31.63 & 34.18 &  33.72 \\
&NAT-Base & 15.6$\times$& 19.51 & 24.47 & 28.89 & 29.35 \\
\midrule
\multirow{4}{*}{RL}
&Reinforce-Base & 15.6$\times$&23.23 & 27.59 & 29.67 & 29.85  \\ 
&Reinforce-Step & 15.6$\times$& 23.76 & 28.08 & 30.14 &  30.31 \\ 
&Reinforce-Topk & 15.6$\times$& 24.68 & 29.05 & 30.73 & 30.88  \\ 
&Traverse-Ref & 15.6$\times$& 25.15 & 29.50 & 31.12 &  31.34 \\ 
\midrule
\multirow{4}{*}{BoN}
&BoW-$Cos$&15.6$\times$& 23.90 & 28.11 & 29.72 & 29.69  \\ 
&BoW-$L_2$&15.6$\times$& 24.22 & 29.03 & 30.08 & 29.93 \\ 
&BoW-$L_1$&15.6$\times$& 24.75& 29.41 & 31.01 & 31.19  \\ 
&BoN-$L_1$ (N=2)& 15.6$\times$& 25.28 & 29.66 & 31.37 & 31.51 \\ 
\midrule
3-Stage&RL+BoN& 15.6$\times$& \textbf{25.54} & \textbf{29.91} & \textbf{31.69} &  \textbf{31.78} \\ 
\bottomrule
\end{tabular}
\label{tab:our_results}
\end{table}
We first compare the performance of our proposed methods, including the Reinforcement Learning (RL) based methods (i.e., Reinforce-Base, Reinforce-Step, Reinforce-Topk and Traverse-Ref) and the Bag-of-Ngrams (BoN) based methods (i.e., BoW-$Cos$, BoW-$L_2$, BoW-$L_1$ and BoN-$L_1$ (N=2)). We also adopt the three-stage training strategy to combine the best performing methods of the above two categories (i.e., Traverse-Ref and BoN-$L_1$ (N=2)), which is denoted as BoN+RL. Table \ref{tab:our_results} reports the experiment results of our methods, from which we have the following observations.

\vspace{5pt}
\noindent{}1. {Sequence-level training can effectively improve the performance of non-autoregressive models.} All the methods listed in Table \ref{tab:our_results} can greatly improve the translation quality of non-autoregressive models. Even the simplest method Reinforce-Base achieves more than 3 BLEU improvements on the WMT14 dataset, indicating that sequence-level training is very suitable for non-autoregressive models.

\vspace{5pt}
\noindent{}2. {The methods we propose for variance reduction are helpful to enhance the performance of the reinforcement learning.} Comparing the reinforcement learning based methods, Reinforce-Step reduces the estimation variance by replacing the sentence reward with step reward, which improves Reinforce-Base by about 0.5 BLEU. Reinforce-Topk further improves Reinforce-Step by about 0.8 BLEU by eliminating the variance of important words. Finally, Traverse-Ref gives a method to traverse the whole search space for Reference-Based rewards, which improves Reinforce-Topk by about 0.4 BLEU. In summary, the methods we propose for variance reduction are helpful to enhance the performance of reinforcement learning.

\vspace{5pt}
\noindent{}3. {Among the three BoW training objectives, the $L_1$ distance is very suitable for the training of non-autoregressive models.} Comparing the three Bag-of-Words objectives, BoW-$L_1$ achieves the best performance and largely outperforms the other two objectives, indicating that the $L_1$ distance of BoW is very suitable for the training of non-autoregressive models. Regarding the Bag-of-Ngrams objective, the main limitation is that many distance metrics like $L_2$ distance and cosine distance cannot be calculated, and the observation on BoW can alleviate this concern to some extent.

\vspace{5pt}
\noindent{}4. {Three-stage training can effectively combine reinforcement learning and bag-of-ngrams.} Three-stage training achieves the best performance by combining the best methods of the two categories (i.e., Traverse-Ref and BoN-$L_1$ (N=2)), which improves the NAT baseline by more than 5 BLEU scores on the WMT14 dataset and more than 2 BLEU scores on the WMT16 dataset. We use Seq-NAT to represent this method.

\subsection{Sequence-Level Training for Iterative NAT}
In the previous section, we have verified the effect of sequence-level training on the vanilla NAT, which is non-iterative and uses a single forward pass during the decoding. In this section, we conduct experiments to evaluate the effect of sequence-level training on iterative NAT, which is an important class of NAT models. We use the Conditional Masked Language Model (CMLM) with mask-predict decoding \cite{ghazvininejad2019maskpredict} as our baseline model, which is a strong iterative NAT model. We apply sequence-level training to finetune the CMLM baseline and call this method Seq-CMLM. Figure \ref{fig:seqcmlm} shows the BLEU scores of CMLM and Seq-CMLM under different number of iterations.

From Figure \ref{fig:seqcmlm}, we can see that Seq-CMLM consistently outperforms CMLM on all number of iterations. Even with 10 iterations, Seq-CMLM can achieve an improvement of 0.42 BLEU to CMLM, reaching a BLEU score of 27.36, showing that sequence-level training is also very effective on Iterative NAT. 

\begin{figure}[ht]
  \begin{center}
    \includegraphics[width=0.8\columnwidth]{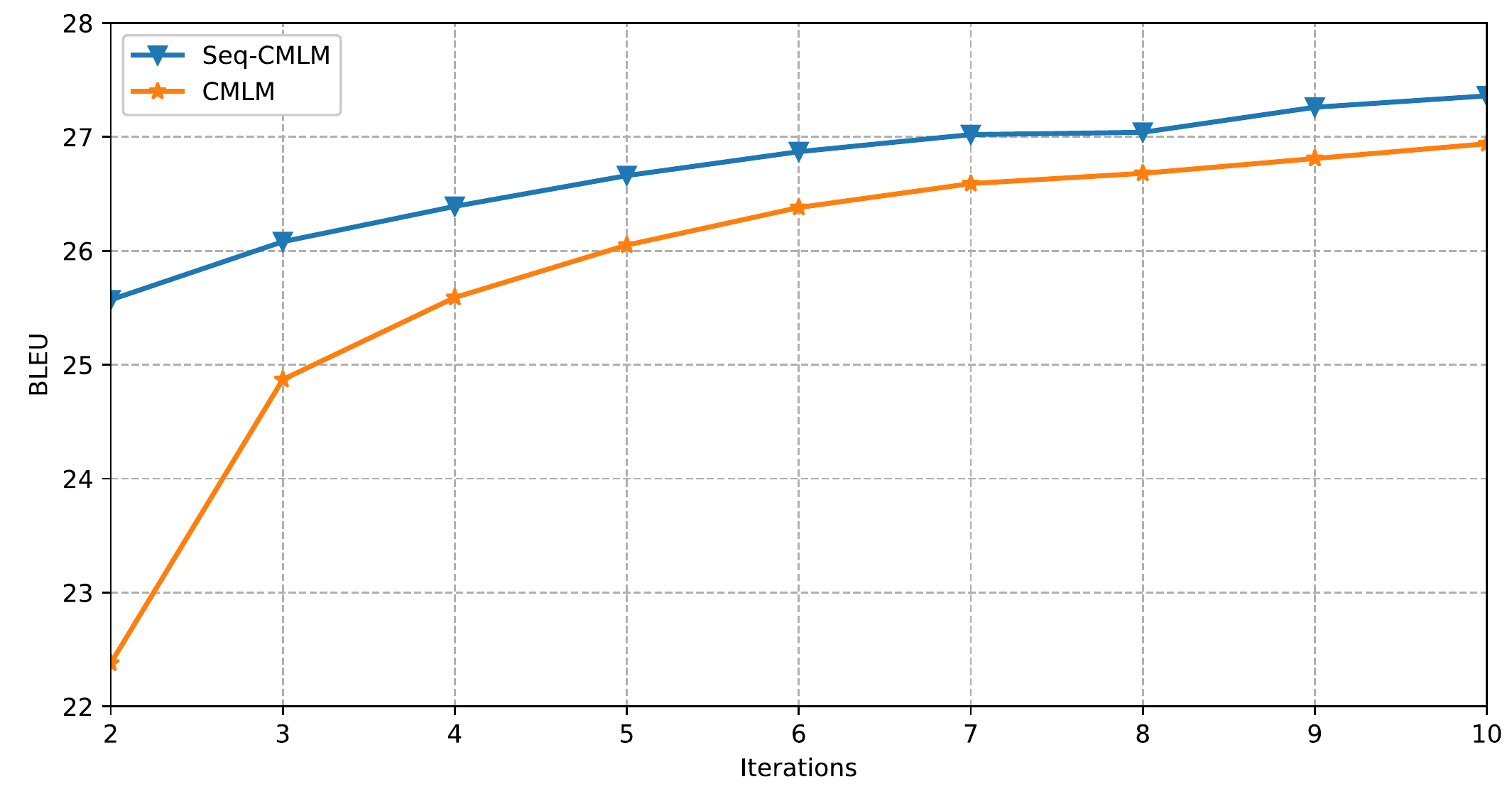}
    \caption{BLEU scores of CMLM and Seq-CMLM on the test set of WMT14 En-De under different number of iterations. For the elegance of the figure, we removed the 1 iteration result, which is 17.47 for CMLM and 24.90 for Seq-CMLM.}
    \label{fig:seqcmlm}
  \end{center}
\end{figure}

\subsection{Speedup in Batch Decoding}
Non-autoregressive models have high speedup in sentence by sentence translation, but this advantage will gradually decrease when we increase the size of decoding batch, making the advantage of NAT in practical application questioned. We resolve this concern by measuring the translation latency of NAT and AT models under different sizes of decoding batch. We conduct experiments on the test set of WMT14 En$\rightarrow$De and show the results in Figure \ref{fig:speedbatch}.
\begin{figure}[ht]
  \begin{center}
    \includegraphics[width=0.8\columnwidth]{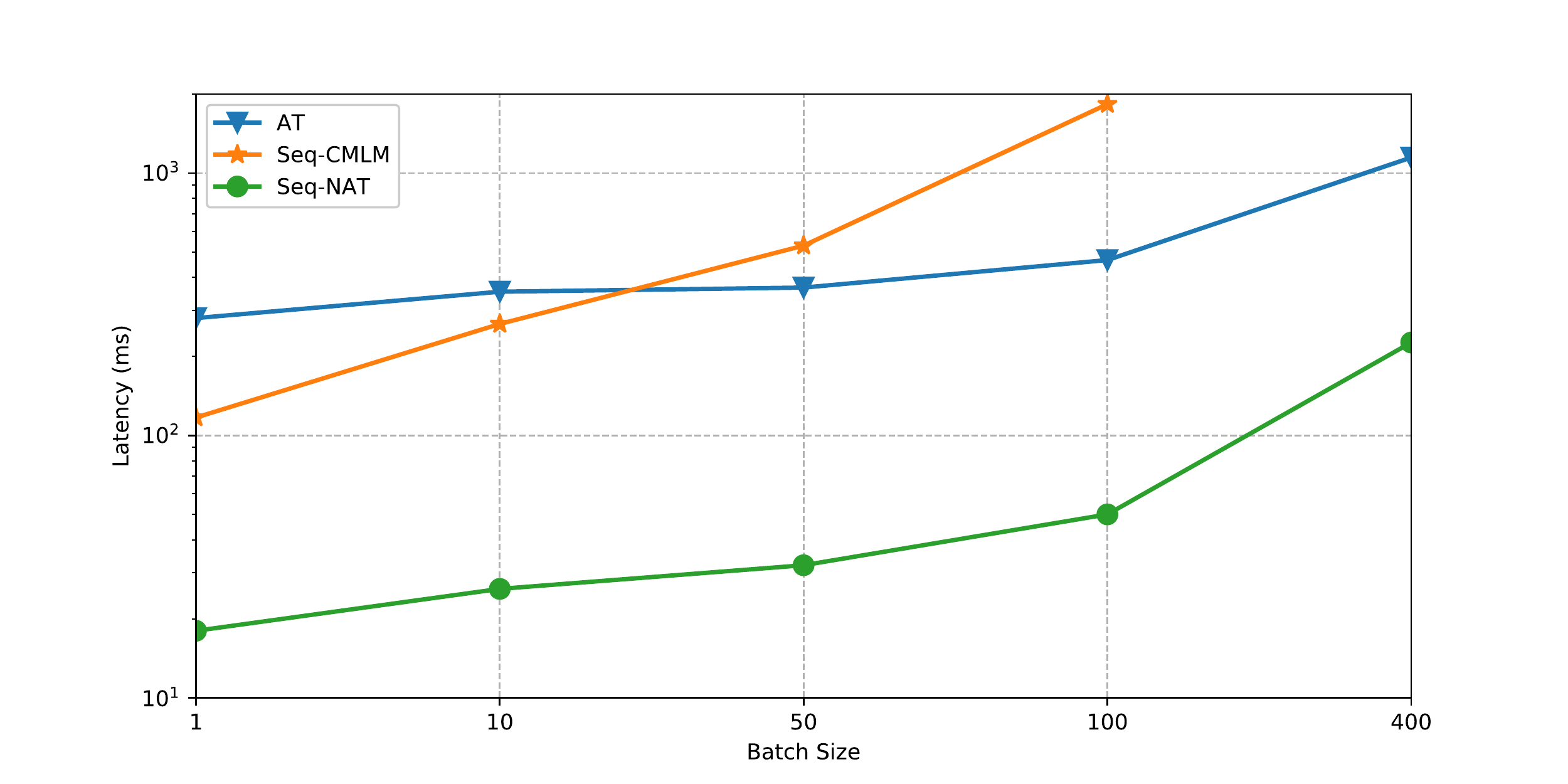}
    \caption{The translation latency of AT, Seq-NAT and Seq-CMLM measured on the test set of WMT14 En$\rightarrow$De. The decoding batch of 400 sentences is not applicable to Seq-CMLM due to the memory limit.}
    \label{fig:speedbatch}
  \end{center}
\end{figure}

From Figure \ref{fig:speedbatch}, as the size of decoding batch increases, both NAT models have higher translation latency. Notably, the iterative model Seq-CMLM becomes even much slower than the autoregressive model when using large batch size. On the contrary, the one-iteration model Seq-NAT still maintains more than 5$\times$ speedup during the batch decoding, demonstrating the efficiency of non-autoregressive generation.

\subsection{Correlation with Translation Quality}
\label{sec:corr}

In this section, we conduct experiments to analyze the correlation between loss functions and the translation quality. We are interested in how the cross-entropy loss and BoN objective correlate with the translation quality. We do not analyze the reinforcement learning based methods since they do not calculate the loss function, but directly estimate the gradient of the loss. We use the GLEU score to represent the translation quality, which is more accurate than BLEU in sentence-level evaluation \cite{wu2016google}. We conduct experiments on the validation set of WMT14 En$\rightarrow$De, which contains 3,000 sentences. Firstly, we load the NAT-Base model and calculate the loss of every sentence in the validation set. Then we use the NAT-Base model to decode the validation set and calculate the GLEU score of every sentence. Finally, we calculate the pearson correlation between the 3,000 GLEU scores and losses.

For the cross-entropy loss, we normalize it by the target sentence length. The BoN training objective is the $L_1$ distance normalized by $2(T-n+1)$. We respectively set $n$ to $2$, $3$ and $4$ to test different n-gram sizes. Table \ref{tab:n} lists the correlation results.

\begin{table}[ht]
\centering
\caption{The pearson correlation bewteen loss functions and translation quality. $n=k$ represents the bag-of-kgrams training objective. CE represents the cross-entropy loss.}
\begin{tabular}{c|c|c|c|c}
\toprule
Loss function& CE&$n=2$ &$n=3$ &$n=4$\\
\hline
Correlation&0.56&0.87&0.84&0.79\\
\bottomrule
\end{tabular}
\label{tab:n}
\end{table}

From Table \ref{tab:n}, we can see that all the three BoN objectives outperform the cross-entropy loss by large margins, and the $n=2$ setting achieves the highest correlation 0.87. To find out where the improvements come from, we analyze the effect of sentence length in the following experiment. We evenly divide the dataset into two parts according to the source length. The first part consists of 1500 short sentences and the second part consists of 1500 long sentences. We respectively measure the pearson correlation on the two parts and report the results in Table \ref{tab:joint}:
\begin{table}[ht]
\centering
\caption{The pearson correlation bewteen loss functions and translation quality on short sentences and long sentences.}
\begin{tabular}{c|c|c|c}
\toprule
& all& short& long\\
\hline
Cross-Entropy&0.56&0.68&0.44\\
\hline
BoN ($n$=2)&0.87&0.89&0.86\\
\bottomrule
\end{tabular}
\label{tab:joint}
\end{table}

From Table \ref{tab:joint}, we can see that the correlation of the cross-entropy loss drops as the sentence length increases, where the BoN objective still has a strong correlation on long sentences. The reason is not difficult to explain. The cross-entropy loss requires the strict alignment between the translation and reference. As the sentence length grows, it becomes harder for NAT to align the translation with the reference, which leads to a decrease of correlation between cross-entropy loss and translation quality. In contrast, the BoN objective is robust to unaligned situations, so its correlation with translation quality stays strong when translating long sentences.

\subsection{Effect of Training Strategy}
In this section, we analyze the effect of training strategy that combines the word-level loss and the two methods based on reinforcement learning and bag-of-ngrams. Before discussing the training strategy, we first give the training speed of each method in Table \ref{tab:speed}. As we can see, Traverse-Ref is the slowest method, which is nearly 10 times slower than BoN. Therefore, when choosing a training strategy, it is necessary to avoid a large number of calculations of Traverse-Ref.

\begin{table}[ht]
\centering
\caption{The training time required for different methods to process 64k tokens. The time is measured on the training set of WMT14 En-De with a single GeForce RTX 3090 GPU. CE is the cross-entropy loss. RF is the abbreviation of Reinforce.}
\begin{tabular}{c|c|c|c|c|c|c|c}
\toprule
Method&CE& BoW& BoN ($n$=2)& RF-Base& RF-Step& RF-Topk&Traverse-Ref\\
\hline
Time&1.2s&1.5s&7.1s&2.2s&16.9s&31.3s&63.2s\\
\bottomrule
\end{tabular}
\label{tab:speed}
\end{table}

We consider four training strategies that involve the word-level cross-entropy loss, the Traverse-Ref loss and the BoN loss. Firstly, we consider the two-stage strategy that uses the cross-entropy loss for pretraining and finetunes the model by the weighted summation of the Traverse-Ref and BoN losses. The second strategy follows the joint training strategy in \citet{DBLP:conf/aaai/ShaoZFMZ20} to combine the BoN and cross-entropy loss for pretraining, and then finetune the model sequentially by BoN and Traverse-Ref. The latter two strategies adopt the three-stage strategy that uses the cross-entropy loss for pretraining and sequentially uses Traverse-Ref and BoN for finetuning. We report the BLEU scores of the four strategies together with the training time in Table \ref{tab:strategy}. 

\begin{table}[ht]
\centering
\caption{Validation BLEU scores and training time of different training strategies on WMT14 En-De. CE represents the cross-entropy loss and TR represents the Traverse-Ref loss. The training time is measured on 8 GeForce RTX 3090 GPUs.}
\begin{tabular}{l|c|c}
\toprule
Strategy&BLEU&Time\\
\hline
1. CE---BoN+TR&24.56&65.6h\\
\hline
2. CE+BoN---BoN---TR&24.82&93.1h\\
\hline
3. CE---TR---BoN&24.63&63.3h\\
\hline
4. CE---BoN---TR&24.69&37.5h\\
\bottomrule
\end{tabular}
\label{tab:strategy}
\end{table}

Table \ref{tab:strategy} shows that the second strategy achieves the best performance but suffers from high training cost. The fourth strategy is more economical, which achieves a slightly lower BLEU but greatly shortens the training time. Compared with the other two strategies, it outperforms them on both BLEU and training time. Therefore, we finally adopt the fourth strategy to combine the word-level training and sequence-level training methods.

\subsection{Effect of Hyperparameters}
In this section, we analyze the effect of some hyperparameters in our method that will affect the model performance, including the topk size $k$ and the reward fuction in reinforcement learning, the ngram size $n$ in bag-of-ngrams training and the batch size for finetuning.

\vspace{5pt}
\noindent{}\textbf{Topk Size.} Reinforce-Topk is proposed to reduce the estimation variance by traversing the topk words, which is important in the gradient estimation. Intuitively, a larger $l$ will make the model stronger. When $k$ is 0, Reinforce-Topk degenerates to Reinforce-Step. When $k$ equals to the vocabulary size $|V|$, Reinforce-Topk has the same performance with Traverse-Ref. However, using such a large $k$ will make the training very slow. Therefore, we need to find an appropriate $k$ to balance the performance and training cost. We conduct experiments on the validation set of WMT14 En-De to see the effect of topk size $k$ and illustrate our results in Figure \ref{fig:topk}.

\begin{figure}[ht]
  \begin{center}
    \includegraphics[width=0.8\columnwidth]{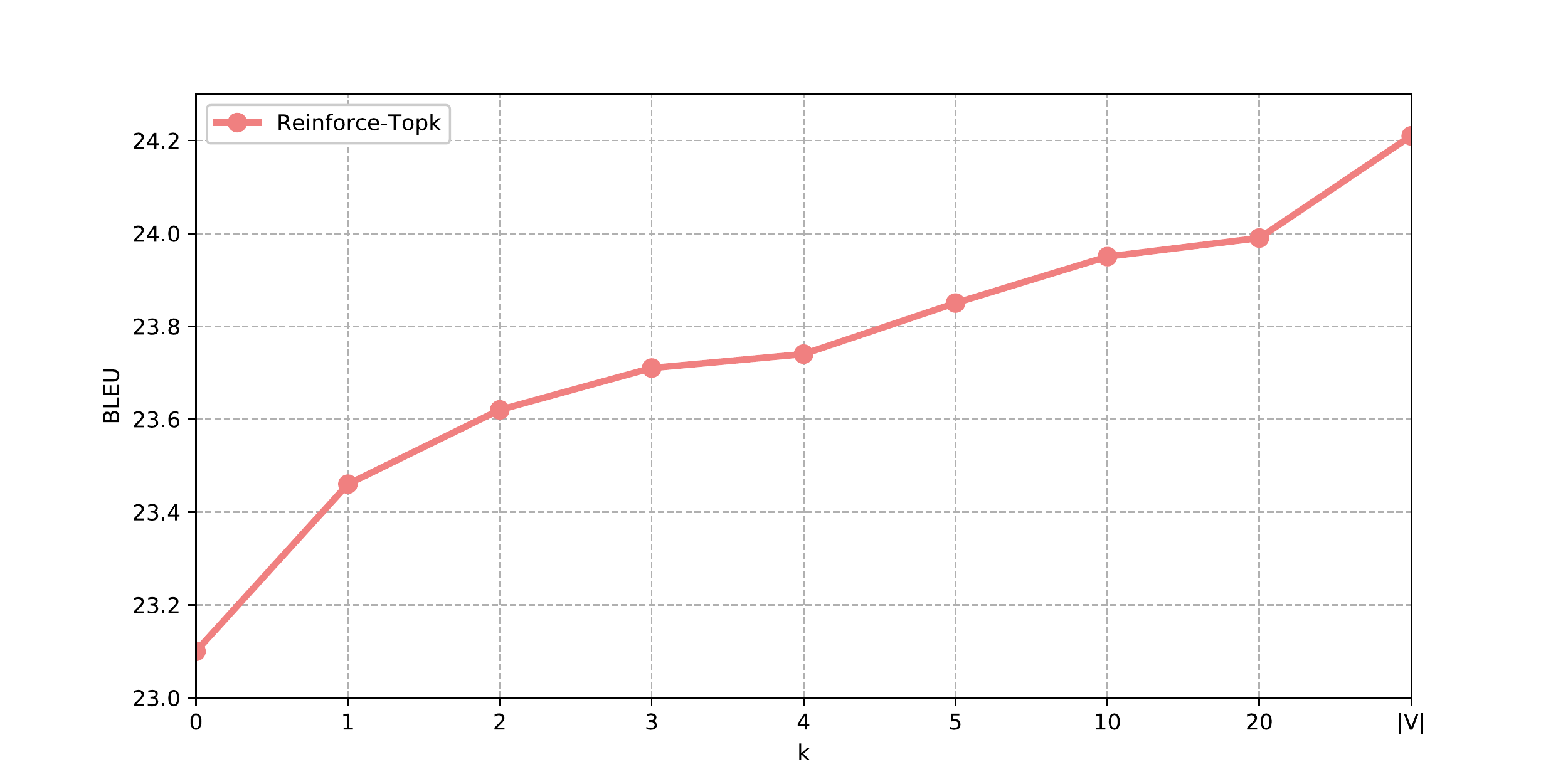}
    \caption{BLEU scores of Reinforce-Topk with different $k$ on the validation set of WMT14 En-De.}
    \label{fig:topk}
  \end{center}
\end{figure}

From Figure \ref{fig:topk}, we can see that the model performance steadily improves as $k$ rises from 0 to 5. When $k$ rises from 5 to 10, the model performance is also slightly improved. However, we can barely see improvements from $k=10$ to $k=20$, showing that the appropriate $k$ is between 5 to 10. In addition, we use Traverse-Ref to show the $k=|V|$ result, which achieves considerable improvements to Reinforce-Topk.

\vspace{5pt}
\noindent{}\textbf{Reward Function.} The performance of reinforcement learning based methods is influenced by the reward function it uses. Our methods have almost no restriction on the reward function, where only Traverse-Ref requires the reward function to Reference-Based. Therefore, we choose three widely used Reference-Based rewards BLEU \cite{papineni2002bleu}, GLEU \cite{wu2016google} and ROUGE-2 \cite{lin-2004-rouge} as candidates. We use the three rewards to finetune the NAT baseline and report their results in Table \ref{tab:reward}. We also directly evaluate the three rewards by the pearson correlation coefficient with translation quality. We use the WMT16 DAseg De-En dataset for evaluation, which consists of 560 source sentences, model translations, reference sentences and human scores. We get the rewards of the model translations and calculate the pearson correlation coefficient between rewards and human scores, as shown in Table \ref{tab:reward}. We can see that there is no significant difference in the BLEU performance of these three rewards. In terms of the correlation, BLEU underperforms ROUGE-2 and GLEU by a large margin, which is possibly due to instability of BLEU as there are usually little matches of 3-gram or 4-gram in sentence-level evaluation. We finally use the ROUGE-2 as the reward function because of its overall performance and fast calculation in our implementation.

\begin{table}[ht]
\centering
\caption{BLEU scores on the validation set of WMT14 En-De when using BLEU, GLEU or ROUGE-2 as reward functions, and the pearson correlation coefficient of these rewards.}
\begin{tabular}{c|c|c|c|c|c}
\toprule
&Correlation&Reinforce-Base&Reinforce-Step&Traverse-Ref&Average\\
\hline
BLEU&0.389&22.58&23.01&24.12&23.24\\
GLEU&0.482&22.56&23.17&24.16&23.30\\
ROUGE-2&0.483&22.39&23.14&24.25&23.26\\
\bottomrule
\end{tabular}
\label{tab:reward}
\end{table}

\vspace{5pt}
\noindent{}\textbf{Ngram Size.} Table \ref{tab:our_results} has shown that the bag-of-ngrams (N=2) objective outperform the bag-of-words objective, but the effect of different ngram sizes $n$ has not been analyzed. Therefore, we conduct experiments on the validation set of WMT14 En-De to see the performance of bag-of-ngrams objectives with different choices of $n$, and we also provide the training speed of BoN-$L_1$ with different $n$. Results are listed in Table \ref{tab:ngram}. We can see that $n=2$ slightly outperforms other choices of $n$, which is consistent with the correlation result in Table \ref{tab:n}. Furthermore, BoN-$L_1$ with $n=2$ is much faster than other choices of $n$ during the training, so we set $n=2$ in the main experiement.

\begin{table}[ht]
\centering
\caption{Validation BLEU scores of BoN-$L_1$ with different $n$ on WMT14 En-De and the time required to process 64k tokens during the training. The time is measured with a single GeForce RTX 3090 GPU.}
\begin{tabular}{c|c|c|c}
\toprule
$n$ &$n=2$ &$n=3$ &$n=4$\\
\hline
BLEU&24.37&24.29&24.07\\
\hline
Time&7.1s&9.7s&12.3s\\
\bottomrule
\end{tabular}
\label{tab:ngram}
\end{table}

\vspace{5pt}
\noindent{}\textbf{Batch Size for Finetuning.} In the training of deep neural models, a larger batch size usually leads to stronger performance, which comes with the cost of greater training costs. In the sequence-level training scenario, since we only need to fine-tune the model for a few steps, we can increase the batch size within a reasonable range, which only slightly increases the training cost but brings considerable improvements on the model performance. To show the effect of the batch size, we use different batch sizes during the BoN finetuning and report the corresponding BLEU scores and total training time in Table \ref{tab:batch}. We can see that the BLEU score steadily increases as the batch size for finetuning increases. In terms of training time, even when we use a batch size of 512k, which is 4 times the size of the pretraining, the training time is only 1.25 times of the NAT baseline.

\begin{table}[ht]
\centering
\caption{Validation BLEU scores on WMT14 En-De and training costs when using different batch size during the finetuning. NoFT represents the NAT baseline without finetuning. The training time is measured on $8$ GeForce RTX $3090$ GPUs.}
\begin{tabular}{c|c|c|c|c|c|c}
\toprule
Batch Size &NoFT&32k&64k&128k&256k&512k\\
\hline
BLEU&19.08&23.77&23.93&24.15&24.27&24.37\\
\hline
Time&24.8h&25.2h&25.6h&26.3h&27.9h&31.0h\\
\bottomrule
\end{tabular}
\label{tab:batch}
\end{table}

\subsection{Effect of Sentence Length}
In section \ref{sec:corr}, we analyze the correlation between loss functions and the translation quality under different sentence lengths, which shows that sequence-level losses greatly outperform the word-level loss in terms of correlation when evaluating long sentences. In this section, we calculate the BLEU performance of baseline methods and our model on different sentence lengths and see whether the better correlation contributes to better BLEU performance. We conduct the experiment on the validation of WMT14 En$\rightarrow$De and divide the sentence pairs into different length buckets according to the length of the source sentence. We use Seq-NAT to represent our best performing method, and calculate the BLEU scores of baseline models and Seq-NAT under different length buckets. The results are shown in Figure \ref{fig:length}.

\begin{figure}[ht]
  \begin{center}
    \includegraphics[width=0.8\columnwidth]{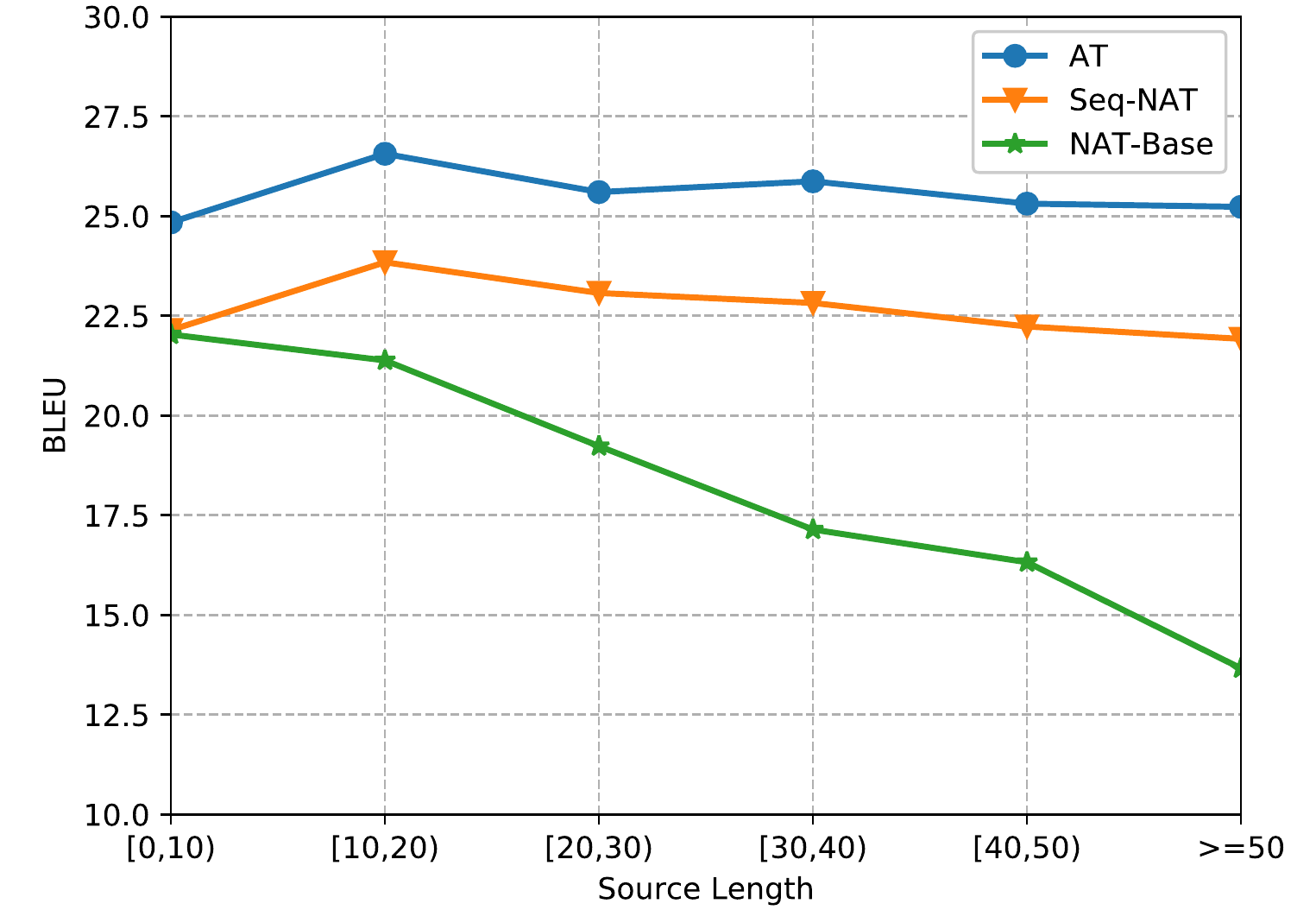}
    \caption{Validation BLEU scores of baseline methods and Seq-NAT on WMT14 En-De under different length buckets.}
    \label{fig:length}
  \end{center}
\end{figure}

From Figure \ref{fig:length}, we can see that NAT-Base and Seq-NAT have similar performance when translating short sentences. However, the translation quality of NAT-Base drops quickly as sentence length increases, where the autoregressive Transformer and Seq-NAT have stable performance over different sentence lengths, which is in good agreement with the correlation results. As the sentence length grows, the correlation between the cross-entropy loss and the translation quality drops, which leads to the weakness of NAT in translating long sentences. On the contrary, sequence-level losses evaluate the translation quality of long sentences with high correlations, so Seq-NAT has stable performance on long sentences.
\subsection{Performance Comparision}
We use Seq-NAT to represent our best performing method. In Table \ref{tab:all_results}, we compare the performance of Seq-NAT against the autoregressive Transformer and strong non-iterative NAT baseline models. Table \ref{tab:all_results} shows that Seq-NAT outperforms most existing NAT systems, and the performance gap between Seq-NAT and the autoregressive teacher is about 2 BLEU on average. Rescoring 9 candidates further improves the translation quality and narrows the performance gap to about 0.8 BLEU on average. It is also worth noting that Seq-NAT does not affect the translation speed, which has the same speedup 15.6$\times$ as NAT-Base. After rescoring 9 candidates, Seq-NAT still maintains 9.0$\times$ speedup.

\begin{table}[t]
\centering
\caption{Performance comparison between our method Seq-NAT and existing methods. The speedup is measured on the WMT14 En-De test set with batch size 1. ``---'' indicates that the result is not reported. $n$ is the number of candidates rescored by the autoregressive teacher.}
\begin{tabular}{lccccc}
\toprule
\multirow{2}{*}{\textbf{Model}}  & \multicolumn{2}{c}{\textbf{WMT14}} & \multicolumn{2}{c}{\textbf{WMT16}}& \multirow{2}{*}{\textbf{Speedup}} \\
& \textbf{EN-DE} & \textbf{DE-EN} & \textbf{EN-RO} & \textbf{RO-EN} \\\midrule
\textbf{Autoregressive} &&&&& \\
\midrule
Transformer &  27.42 & 31.63 & 34.18 &  33.72 &1.0$\times$\\
\midrule
\multicolumn{2}{l}{\textbf{Non-Autoregressive w/o Rescoring}} &&&&  \\
\midrule
NAT-FT \cite{gu2017non} &  17.69 & 21.47 & 27.29 &  29.06&15.6$\times$\\
LT \cite{kaiser2018fast}  &  19.80 & --- & --- &  ---&5.8$\times$\\
CTC \cite{libovicky2018end}  &  17.68 & 19.80 & 19.93 & 24.71 &---\\
ENAT \cite{guo2019non} & 20.65 & 23.02 & 30.08 & --- & 25.3$\times$\\
NAT-REG \cite{wang2019non} &  20.65 &24.77 &--- & --- &27.6$\times$\\
NAT-Hints \cite{li-etal-2019-hint} &  21.11& 25.24&--- &--- &30.8$\times$\\ 
Reinforce-NAT \cite{shao-etal-2019-retrieving} &19.15 &22.52 &27.09 &27.93 &10.77$\times$\\
BoN-Joint+FT \cite{DBLP:conf/aaai/ShaoZFMZ20} &20.90 &24.61 &28.31 &29.29 &10.73$\times$\\
imitate-NAT \cite{wei-etal-2019-imitation} & 22.44 & 25.67 & 28.61 & 28.90 & 18.6$\times$\\
FlowSeq \cite{Ma_2019} & 23.72 & 28.39 & 29.73 & 30.72 &---\\
NART-DCRF \cite{NIPS2019_8566} &  23.44 &  27.22 &  --- & ---&10.4$\times$\\
ReorderNAT \cite{ran2019guiding} & 22.79 & 27.28 & 29.30 & 29.50 & 16.1$\times$\\
PNAT \cite{bao2019nonautoregressive} & 23.05 & 27.18 & --- & ---& 7.3$\times$\\
FCL-NAT \cite{fine} & 21.70 & 25.32 & ---& ---&28.9$\times$\\
AXE \cite{Aligned} & 23.53 & 27.90 & 30.75 & 31.54 & ---\\
EM \cite{pmlr-v119-sun20c} & 24.54 & 27.93 & --- &  --- & 16.4$\times$\\
Imputer \cite{saharia-etal-2020-non} & \textbf{25.80} & 28.40 & \textbf{32.30} & 31.70 & ---\\
Seq-NAT (ours)&   25.54   & \textbf{29.91} & 31.69 & \textbf{31.78}&15.6$\times$\\
\midrule
\multicolumn{2}{l}{\textbf{Non-Autoregressive w/ Rescoring}} &&&&  \\
\midrule
NAT-FT (n=10) \cite{gu2017non}& 18.66 & 22.41 & 29.02 & 30.76 & 7.68$\times$\\
NAT-FT (n=100) \cite{gu2017non}& 19.17 & 23.20 & 29.79& 31.44 & 2.36$\times$\\
LT (n=10) \cite{kaiser2018fast}& 21.0 & --- & --- & --- & --- \\
ENAT (n=9) \cite{guo2019non}& 24.28 & 26.10 & \textbf{34.51} & --- & 12.4$\times$\\
NAT-REG (n=9) \cite{wang2019non}& 24.61 & 28.90 & --- & --- & 15.1$\times$\\
NAT-Hints (n=9) \cite{li-etal-2019-hint}& 25.20 & 29.52 & --- & --- & 17.8$\times$\\
imitate-NAT (n=7) \cite{wei-etal-2019-imitation}& 24.15 & 27.28 & 31.45 & 31.81 & 9.70$\times$\\
FlowSeq (n=15) \cite{Ma_2019}& 25.03 & 30.48 & 31.89 & 32.43 & ---\\
NART-DCRF (n=9) \cite{NIPS2019_8566}& 26.07 & 29.68 & --- & --- & 6.14$\times$\\
PNAT (n=7) \cite{bao2019nonautoregressive}& ---  & 27.90& ---& ---& 3.7$\times$\\
FCL-NAT (n=9) \cite{fine}& 25.75 & 29.50 & --- & ---&16.0$\times$\\
EM (n=9) \cite{pmlr-v119-sun20c}& 25.75 & 29.29 & --- & ---& 9.14$\times$\\
Seq-NAT (n=9, ours) &   \textbf{26.35}   & \textbf{30.70} & 33.21 & \textbf{33.28} &9.0$\times$\\

\bottomrule
\end{tabular}
\label{tab:all_results}
\end{table}
\subsection{Case Study}
In Table \ref{tab:case_study}, we present three translation cases from the validation set of WMT14 De-En to analyze how sequence-level training improves the translation quality of NAT. We can see from the three cases that the NAT baseline suffers from over-translation and under-translation errors especially when translating long sentences. The output of NAT-Base contains many repeated translations like `aggressive', `shadow' and `14'. Additionally, the translation is incomplete since many information are missing. As we mentioned before, this is due to the limitation of the word-level cross-entropy loss we use, which evaluates the generation quality of each position independently and does not model the target-side sequential dependency, making NAT only focus on local correctness and ignore the overall translation quality.

When we look at the translation results of Seq-NAT, we can see that the errors of over-translation and under-translation are significantly reduced. Although there are still a few repeated translations when translating long sentences, the translation results are basically accurate and comparable to the autoregressive Transformer. Compared with the NAT baseline, Seq-NAT focuses more on the overall accuracy after the sequence-level training, which greatly improves the translation quality.

\begin{table}[t]
\centering
\caption{Three translation cases in the validation set of WMT14 De-En. Source and Target are respectively the source sentence and reference sentence. AT is the output of the autoregressive Transformer. NAT-Base is the output of the NAT baseline. Seq-NAT is the output of our model.}
\begin{tabular}{r|l}
\toprule
{Source} & Es gibt Krebsarten , die aggressiv und andere , die indolent sind .
 \\
\hline
{Target} &  There are aggressive cancers and others that are indolent .\\
\hline
{AT} & There are cancers that are aggressive and others that are indolent . \\
\hline
{NAT-Base} & There are cancers cancer aggressive aggressive others are indindent . \\
\hline
{Seq-NAT} & There are cancers that are aggressive and others that indolent . \\
\midrule
\midrule
\multirow{3}{*}{Source} & Wir wissen ohne den Schatten eines Zweifels , dass wir ein echtes \\&neues Teilchen haben , und dass es dem vom Standardmodell \\&vorausgesagten Higgs-Boson stark ähnelt .
 \\
\hline
\multirow{3}{*}{Target} &We know without a shadow of a doubt that it is a new authentic \\&particle , and greatly resembles the Higgs boson predicted by the \\&Standard Model .\\
\hline
\multirow{3}{*}{AT} & We know without the shadow of a doubt that we have a real new \\&particle , and that it is very similar to the Higgs Boson predicted by \\&the standard model .\\
\hline
\multirow{3}{*}{NAT-Base} & We know without without shadow shadow of doubt doubt that we \\&have a new particle le that it is very similar similar to HiggsgsBoson\\& predicted by the standard model . \\
\hline
\multirow{3}{*}{Seq-NAT} & We know without the shadow of a doubt that we have a real new \\&particle and that it is very similar to the Higgs-Boson predicted by \\&the standard model . \\
\midrule
\midrule
\multirow{4}{*}{Source} & und noch tragischer ist , dass es Oxford war - eine Universität, die\\& nicht nur 14 Tory-Premierminister hervorbrachte , sondern sich bis\\&  heute hinter einem unverdienten Ruf von Gleichberechtigung und\\& Gedankenfreiheit versteckt .\\
\hline
\multirow{3}{*}{Target} & even more tragic is that it was Oxford , which not only produced 14 \\&Tory prime ministers , but , to this day , hides behind an ill-deserved \\&reputation for equality and freedom of thought . \\
\hline
\multirow{3}{*}{AT} & And more tragically , it was Oxford - a university that not only \\&produced 14 Tory prime ministers but hides to this day behind an \\&undeserved reputation of equality and freedom of thought .\\ 
\hline
\multirow{4}{*}{NAT-Base} & More more tragic tragic it was Oxford Oxford Oxford university \\&university university not only 14 14 14 Torprime prime ministers but \\&but hihihihiday behind an unved call for equality and freedom of \\&thought .
\\
\hline
\multirow{3}{*}{Seq-NAT} & More is even more tragic that it was Oxford - a university that not \\&only produced 14 Tory prime ministers , but continues continues far\\& hidden behind an unved call for equality and freedom of thought .\\
\bottomrule
\end{tabular}
\label{tab:case_study}
\end{table}

\section{Conclusion}
Non-autoregressive translation achieves significant decoding speedup through generating target words independently and simultaneously. However, the word-level cross-entropy loss cannot evaluate the output of NAT properly. As a result, NAT has a relatively low translation quality and tends to generate translations with over-translation and under-translation errors. In this article, we propose to train NAT with sequence-level training objectives. Firstly, we propose to train NAT to optimize the sequence-level evaluation metric based on novel reinforcement algorithms customized for NAT. Then we introduce a novel Bag-of-Ngrams objective for NAT, which is differentiable and can be calculated efficiently. Finally, we use a three-stage training strategy to combine the strengths of the two training methods and the word-level loss. Experimental results show that our method achieves remarkable performance on all translation tasks.

\starttwocolumn
\bibliography{compling_style}

\begin{thebibliography}{68}
\expandafter\ifx\csname natexlab\endcsname\relax\def\natexlab#1{#1}\fi

\bibitem[{Akoury, Krishna, and Iyyer(2019)}]{akoury-etal-2019-syntactically}
Akoury, Nader, Kalpesh Krishna, and Mohit Iyyer. 2019.
\newblock Syntactically supervised transformers for faster neural machine
  translation.
\newblock In \emph{Proceedings of the 57th Annual Meeting of the Association
  for Computational Linguistics}, pages 1269--1281, Association for
  Computational Linguistics, Florence, Italy.

\bibitem[{Bahdanau et~al.(2017)Bahdanau, Brakel, Xu, Goyal, Lowe, Pineau,
  Courville, and Bengio}]{bahdanau2016actor}
Bahdanau, Dzmitry, Philemon Brakel, Kelvin Xu, Anirudh Goyal, Ryan Lowe, Joelle
  Pineau, Aaron~C. Courville, and Yoshua Bengio. 2017.
\newblock An actor-critic algorithm for sequence prediction.
\newblock In \emph{5th International Conference on Learning Representations,
  {ICLR} 2017, Toulon, France, April 24-26, 2017, Conference Track
  Proceedings}.

\bibitem[{Bahdanau, Cho, and Bengio(2015)}]{bahdanau2014neural}
Bahdanau, Dzmitry, Kyunghyun Cho, and Yoshua Bengio. 2015.
\newblock Neural machine translation by jointly learning to align and
  translate.
\newblock In \emph{3rd International Conference on Learning Representations,
  {ICLR} 2015, San Diego, CA, USA, May 7-9, 2015, Conference Track
  Proceedings}.

\bibitem[{Bao et~al.(2019)Bao, Zhou, Feng, Wang, Huang, Chen, and
  Lei}]{bao2019nonautoregressive}
Bao, Yu, Hao Zhou, Jiangtao Feng, Mingxuan Wang, Shujian Huang, Jiajun Chen,
  and LI~Lei. 2019.
\newblock Non-autoregressive transformer by position learning.
\newblock \emph{arXiv preprint arXiv:1911.10677}.

\bibitem[{Bengio et~al.(2015)Bengio, Vinyals, Jaitly, and
  Shazeer}]{bengio2015scheduled}
Bengio, Samy, Oriol Vinyals, Navdeep Jaitly, and Noam Shazeer. 2015.
\newblock Scheduled sampling for sequence prediction with recurrent neural
  networks.
\newblock In \emph{Proceedings of the 28th International Conference on Neural
  Information Processing Systems - Volume 1}, NIPS'15, pages 1171--1179, MIT
  Press, Cambridge, MA, USA.

\bibitem[{Cho et~al.(2014)Cho, van Merri{\"e}nboer, Gulcehre, Bahdanau,
  Bougares, Schwenk, and Bengio}]{cho-etal-2014-learning}
Cho, Kyunghyun, Bart van Merri{\"e}nboer, Caglar Gulcehre, Dzmitry Bahdanau,
  Fethi Bougares, Holger Schwenk, and Yoshua Bengio. 2014.
\newblock Learning phrase representations using {RNN} encoder{--}decoder for
  statistical machine translation.
\newblock In \emph{Proceedings of the 2014 Conference on Empirical Methods in
  Natural Language Processing ({EMNLP})}, pages 1724--1734, Association for
  Computational Linguistics, Doha, Qatar.

\bibitem[{Devlin et~al.(2019)Devlin, Chang, Lee, and
  Toutanova}]{devlin-etal-2019-bert}
Devlin, Jacob, Ming-Wei Chang, Kenton Lee, and Kristina Toutanova. 2019.
\newblock {BERT}: Pre-training of deep bidirectional transformers for language
  understanding.
\newblock In \emph{Proceedings of the 2019 Conference of the North {A}merican
  Chapter of the Association for Computational Linguistics: Human Language
  Technologies, Volume 1 (Long and Short Papers)}, pages 4171--4186,
  Association for Computational Linguistics, Minneapolis, Minnesota.

\bibitem[{Edunov et~al.(2018)Edunov, Ott, Auli, Grangier, and
  Ranzato}]{edunov-etal-2018-classical}
Edunov, Sergey, Myle Ott, Michael Auli, David Grangier, and Marc{'}Aurelio
  Ranzato. 2018.
\newblock Classical structured prediction losses for sequence to sequence
  learning.
\newblock In \emph{Proceedings of the 2018 Conference of the North {A}merican
  Chapter of the Association for Computational Linguistics: Human Language
  Technologies, Volume 1 (Long Papers)}, pages 355--364, Association for
  Computational Linguistics, New Orleans, Louisiana.

\bibitem[{Feng et~al.(2021)Feng, Gu, Guo, Yang, and Shao}]{feng2021guiding}
Feng, Yang, Shuhao Gu, Dengji Guo, Zhengxin Yang, and Chenze Shao. 2021.
\newblock Guiding teacher forcing with seer forcing for neural machine
  translation.
\newblock In \emph{Proceedings of the 59th Annual Meeting of the Association
  for Computational Linguistics and the 11th International Joint Conference on
  Natural Language Processing (Volume 1: Long Papers)}, pages 2862--2872,
  Association for Computational Linguistics, Online.

\bibitem[{Gehring et~al.(2017)Gehring, Auli, Grangier, Yarats, and
  Dauphin}]{gehring2017convolutional}
Gehring, Jonas, Michael Auli, David Grangier, Denis Yarats, and Yann~N.
  Dauphin. 2017.
\newblock Convolutional sequence to sequence learning.
\newblock In \emph{Proceedings of the 34th International Conference on Machine
  Learning - Volume 70}, ICML'17, pages 1243--1252, JMLR.org.

\bibitem[{Ghazvininejad et~al.(2020)Ghazvininejad, Karpukhin, Zettlemoyer, and
  Levy}]{Aligned}
Ghazvininejad, Marjan, Vladimir Karpukhin, Luke Zettlemoyer, and Omer Levy.
  2020.
\newblock Aligned cross entropy for non-autoregressive machine translation.
\newblock In \emph{Proceedings of the 37th International Conference on Machine
  Learning, {ICML} 2020, 13-18 July 2020, Virtual Event}, volume 119 of
  \emph{Proceedings of Machine Learning Research}, pages 3515--3523, {PMLR}.

\bibitem[{Ghazvininejad et~al.(2019)Ghazvininejad, Levy, Liu, and
  Zettlemoyer}]{ghazvininejad2019maskpredict}
Ghazvininejad, Marjan, Omer Levy, Yinhan Liu, and Luke Zettlemoyer. 2019.
\newblock Mask-predict: Parallel decoding of conditional masked language
  models.
\newblock In \emph{Proceedings of the 2019 Conference on Empirical Methods in
  Natural Language Processing and the 9th International Joint Conference on
  Natural Language Processing (EMNLP-IJCNLP)}, pages 6112--6121.

\bibitem[{Goyal et~al.(2016)Goyal, Lamb, Zhang, Zhang, Courville, and
  Bengio}]{professor}
Goyal, Anirudh, Alex Lamb, Ying Zhang, Saizheng Zhang, Aaron~C. Courville, and
  Yoshua Bengio. 2016.
\newblock Professor forcing: {A} new algorithm for training recurrent networks.
\newblock In \emph{Advances in Neural Information Processing Systems 29: Annual
  Conference on Neural Information Processing Systems 2016, December 5-10,
  2016, Barcelona, Spain}, pages 4601--4609.

\bibitem[{Gu et~al.(2018)Gu, Bradbury, Xiong, Li, and Socher}]{gu2017non}
Gu, Jiatao, James Bradbury, Caiming Xiong, Victor O.~K. Li, and Richard Socher.
  2018.
\newblock Non-autoregressive neural machine translation.
\newblock In \emph{6th International Conference on Learning Representations,
  {ICLR} 2018, Vancouver, BC, Canada, April 30 - May 3, 2018, Conference Track
  Proceedings}.

\bibitem[{Gu, Wang, and Zhao(2019)}]{gu2019levenshtein}
Gu, Jiatao, Changhan Wang, and Junbo Zhao. 2019.
\newblock Levenshtein transformer.
\newblock In \emph{Advances in Neural Information Processing Systems 32: Annual
  Conference on Neural Information Processing Systems 2019, NeurIPS 2019,
  December 8-14, 2019, Vancouver, BC, Canada}, pages 11179--11189.

\bibitem[{Guo et~al.(2019)Guo, Tan, He, Qin, Xu, and Liu}]{guo2019non}
Guo, Junliang, Xu~Tan, Di~He, Tao Qin, Linli Xu, and Tie-Yan Liu. 2019.
\newblock Non-autoregressive neural machine translation with enhanced decoder
  input.
\newblock \emph{Proceedings of the AAAI Conference on Artificial Intelligence},
  33(01):3723--3730.

\bibitem[{He et~al.(2016)He, Xia, Qin, Wang, Yu, Liu, and Ma}]{he2016dual}
He, Di, Yingce Xia, Tao Qin, Liwei Wang, Nenghai Yu, Tieyan Liu, and Wei-Ying
  Ma. 2016.
\newblock Dual learning for machine translation.
\newblock In \emph{Advances in Neural Information Processing Systems 29}, pages
  820--828, Curran Associates, Inc.

\bibitem[{Hinton, Vinyals, and Dean(2015)}]{hinton2015distilling}
Hinton, Geoffrey, Oriol Vinyals, and Jeff Dean. 2015.
\newblock Distilling the knowledge in a neural network.
\newblock \emph{arXiv preprint arXiv:1503.02531}.

\bibitem[{Joachims(1998)}]{joachims1998text}
Joachims, Thorsten. 1998.
\newblock Text categorization with support vector machines: Learning with many
  relevant features.
\newblock In \emph{Proceedings of the 10th European Conference on Machine
  Learning}, ECML'98, pages 137--142, Springer-Verlag, Berlin, Heidelberg.

\bibitem[{Junliang et~al.(2020)Junliang, Tan, Xu, Qin, Chen, and Liu}]{fine}
Junliang, Guo, Xu~Tan, Linli Xu, Tao Qin, Enhong Chen, and Tie-Yan Liu. 2020.
\newblock Fine-tuning by curriculum learning for non-autoregressive neural
  machine translation.
\newblock \emph{Proceedings of the AAAI Conference on Artificial Intelligence},
  34:7839--7846.

\bibitem[{Kaiser et~al.(2018)Kaiser, Bengio, Roy, Vaswani, Parmar, Uszkoreit,
  and Shazeer}]{kaiser2018fast}
Kaiser, Lukasz, Samy Bengio, Aurko Roy, Ashish Vaswani, Niki Parmar, Jakob
  Uszkoreit, and Noam Shazeer. 2018.
\newblock Fast decoding in sequence models using discrete latent variables.
\newblock In \emph{Proceedings of the 35th International Conference on Machine
  Learning}, volume~80 of \emph{Proceedings of Machine Learning Research},
  pages 2390--2399, PMLR, StockholmsmÃ€ssan, Stockholm Sweden.

\bibitem[{Kasai et~al.(2020)Kasai, Cross, Ghazvininejad, and
  Gu}]{Kasai2020NonautoregressiveMT}
Kasai, Jungo, James Cross, Marjan Ghazvininejad, and Jiatao Gu. 2020.
\newblock Non-autoregressive machine translation with disentangled context
  transformer.
\newblock In \emph{Proceedings of the 37th International Conference on Machine
  Learning, {ICML} 2020, 13-18 July 2020, Virtual Event}, volume 119 of
  \emph{Proceedings of Machine Learning Research}, pages 5144--5155, {PMLR}.

\bibitem[{Kim and Rush(2016)}]{kim-rush-2016-sequence}
Kim, Yoon and Alexander~M. Rush. 2016.
\newblock Sequence-level knowledge distillation.
\newblock In \emph{Proceedings of the 2016 Conference on Empirical Methods in
  Natural Language Processing}, pages 1317--1327, Association for Computational
  Linguistics, Austin, Texas.

\bibitem[{Kingma and Ba(2014)}]{DBLP:journals/corr/KingmaB14}
Kingma, Diederik~P. and Jimmy Ba. 2014.
\newblock Adam: {A} method for stochastic optimization.
\newblock \emph{arXiv preprint arXiv:1412.6980}.

\bibitem[{Lee, Mansimov, and Cho(2018)}]{lee2018deterministic}
Lee, Jason, Elman Mansimov, and Kyunghyun Cho. 2018.
\newblock Deterministic non-autoregressive neural sequence modeling by
  iterative refinement.
\newblock In \emph{Proceedings of the 2018 Conference on Empirical Methods in
  Natural Language Processing}, pages 1173--1182, Association for Computational
  Linguistics, Brussels, Belgium.

\bibitem[{Li et~al.(2016)Li, Zhao, Liu, Wang, and Du}]{li-etal-2016-weighted}
Li, Bofang, Zhe Zhao, Tao Liu, Puwei Wang, and Xiaoyong Du. 2016.
\newblock Weighted neural bag-of-n-grams model: New baselines for text
  classification.
\newblock In \emph{Proceedings of {COLING} 2016, the 26th International
  Conference on Computational Linguistics: Technical Papers}, pages 1591--1600,
  The COLING 2016 Organizing Committee, Osaka, Japan.

\bibitem[{Li et~al.(2019)Li, Lin, He, Tian, Qin, Wang, and
  Liu}]{li-etal-2019-hint}
Li, Zhuohan, Zi~Lin, Di~He, Fei Tian, Tao Qin, Liwei Wang, and Tie-Yan Liu.
  2019.
\newblock Hint-based training for non-autoregressive machine translation.
\newblock In \emph{Proceedings of the 2019 Conference on Empirical Methods in
  Natural Language Processing and the 9th International Joint Conference on
  Natural Language Processing (EMNLP-IJCNLP)}, pages 5708--5713, Association
  for Computational Linguistics, Hong Kong, China.

\bibitem[{Libovick{\'y} and Helcl(2018)}]{libovicky2018end}
Libovick{\'y}, Jind{\v{r}}ich and Jind{\v{r}}ich Helcl. 2018.
\newblock End-to-end non-autoregressive neural machine translation with
  connectionist temporal classification.
\newblock In \emph{Proceedings of the 2018 Conference on Empirical Methods in
  Natural Language Processing}, pages 3016--3021, Association for Computational
  Linguistics, Brussels, Belgium.

\bibitem[{Lin(2004)}]{lin-2004-rouge}
Lin, Chin-Yew. 2004.
\newblock {ROUGE}: A package for automatic evaluation of summaries.
\newblock In \emph{Text Summarization Branches Out}, pages 74--81, Association
  for Computational Linguistics, Barcelona, Spain.

\bibitem[{Ma et~al.(2018)Ma, Sun, Wang, and Lin}]{ma2018bag}
Ma, Shuming, Xu~Sun, Yizhong Wang, and Junyang Lin. 2018.
\newblock Bag-of-words as target for neural machine translation.
\newblock In \emph{Proceedings of the 56th Annual Meeting of the Association
  for Computational Linguistics (Volume 2: Short Papers)}, pages 332--338,
  Association for Computational Linguistics, Melbourne, Australia.

\bibitem[{Ma et~al.(2019)Ma, Zhou, Li, Neubig, and Hovy}]{Ma_2019}
Ma, Xuezhe, Chunting Zhou, Xian Li, Graham Neubig, and Eduard Hovy. 2019.
\newblock {F}low{S}eq: Non-autoregressive conditional sequence generation with
  generative flow.
\newblock In \emph{Proceedings of the 2019 Conference on Empirical Methods in
  Natural Language Processing and the 9th International Joint Conference on
  Natural Language Processing (EMNLP-IJCNLP)}, pages 4282--4292, Association
  for Computational Linguistics, Hong Kong, China.

\bibitem[{Ng, Harada, and Russell(1999)}]{Ng1999PolicyIU}
Ng, Andrew~Y., Daishi Harada, and Stuart~J. Russell. 1999.
\newblock Policy invariance under reward transformations: Theory and
  application to reward shaping.
\newblock In \emph{Proceedings of the Sixteenth International Conference on
  Machine Learning}, ICML '99, pages 278--287, Morgan Kaufmann Publishers Inc.,
  San Francisco, CA, USA.

\bibitem[{Norouzi et~al.(2016)Norouzi, Bengio, Chen, Jaitly, Schuster, Wu, and
  Schuurmans}]{10.5555/3157096.3157290}
Norouzi, Mohammad, Samy Bengio, Zhifeng Chen, Navdeep Jaitly, Mike Schuster,
  Yonghui Wu, and Dale Schuurmans. 2016.
\newblock Reward augmented maximum likelihood for neural structured prediction.
\newblock In \emph{Proceedings of the 30th International Conference on Neural
  Information Processing Systems}, NIPS'16, pages 1731--1739, Curran Associates
  Inc., Red Hook, NY, USA.

\bibitem[{Pang, Lee, and Vaithyanathan(2002)}]{pang-etal-2002-thumbs}
Pang, Bo, Lillian Lee, and Shivakumar Vaithyanathan. 2002.
\newblock Thumbs up? sentiment classification using machine learning
  techniques.
\newblock In \emph{Proceedings of the 2002 Conference on Empirical Methods in
  Natural Language Processing ({EMNLP} 2002)}, pages 79--86, Association for
  Computational Linguistics.

\bibitem[{Papineni et~al.(2002)Papineni, Roukos, Ward, and
  Zhu}]{papineni2002bleu}
Papineni, Kishore, Salim Roukos, Todd Ward, and Wei-Jing Zhu. 2002.
\newblock {B}leu: a method for automatic evaluation of machine translation.
\newblock In \emph{Proceedings of the 40th Annual Meeting of the Association
  for Computational Linguistics}, pages 311--318, Association for Computational
  Linguistics, Philadelphia, Pennsylvania, USA.

\bibitem[{Ran et~al.(2020)Ran, Lin, Li, and Zhou}]{ran-etal-2020-learning}
Ran, Qiu, Yankai Lin, Peng Li, and Jie Zhou. 2020.
\newblock Learning to recover from multi-modality errors for non-autoregressive
  neural machine translation.
\newblock In \emph{Proceedings of the 58th Annual Meeting of the Association
  for Computational Linguistics}, pages 3059--3069, Association for
  Computational Linguistics, Online.

\bibitem[{Ran et~al.(2021)Ran, Lin, Li, and Zhou}]{ran2019guiding}
Ran, Qiu, Yankai Lin, Peng Li, and Jie Zhou. 2021.
\newblock Guiding non-autoregressive neural machine translation decoding with
  reordering information.
\newblock In \emph{Thirty-Fifth {AAAI} Conference on Artificial Intelligence,
  {AAAI} 2021, Virtual Event, February 2-9, 2021}, pages 13727--13735, {AAAI}
  Press.

\bibitem[{Ranzato et~al.(2016)Ranzato, Chopra, Auli, and
  Zaremba}]{ranzato2015sequence}
Ranzato, Marc'Aurelio, Sumit Chopra, Michael Auli, and Wojciech Zaremba. 2016.
\newblock Sequence level training with recurrent neural networks.
\newblock In \emph{4th International Conference on Learning Representations,
  {ICLR} 2016, San Juan, Puerto Rico, May 2-4, 2016, Conference Track
  Proceedings}.

\bibitem[{Saharia et~al.(2020)Saharia, Chan, Saxena, and
  Norouzi}]{saharia-etal-2020-non}
Saharia, Chitwan, William Chan, Saurabh Saxena, and Mohammad Norouzi. 2020.
\newblock Non-autoregressive machine translation with latent alignments.
\newblock In \emph{Proceedings of the 2020 Conference on Empirical Methods in
  Natural Language Processing (EMNLP)}, pages 1098--1108, Association for
  Computational Linguistics, Online.

\bibitem[{Sennrich, Haddow, and Birch(2016)}]{sennrich-etal-2016-neural}
Sennrich, Rico, Barry Haddow, and Alexandra Birch. 2016.
\newblock Neural machine translation of rare words with subword units.
\newblock In \emph{Proceedings of the 54th Annual Meeting of the Association
  for Computational Linguistics (Volume 1: Long Papers)}, pages 1715--1725,
  Association for Computational Linguistics, Berlin, Germany.

\bibitem[{Shan, Feng, and Shao(2021)}]{shan2021modeling}
Shan, Yong, Yang Feng, and Chenze Shao. 2021.
\newblock Modeling coverage for non-autoregressive neural machine translation.
\newblock \emph{arXiv preprint arXiv:2104.11897}.

\bibitem[{Shao, Chen, and Feng(2018)}]{shao2018greedy}
Shao, Chenze, Xilin Chen, and Yang Feng. 2018.
\newblock Greedy search with probabilistic n-gram matching for neural machine
  translation.
\newblock In \emph{Proceedings of the 2018 Conference on Empirical Methods in
  Natural Language Processing}, pages 4778--4784, Association for Computational
  Linguistics, Brussels, Belgium.

\bibitem[{Shao et~al.(2019)Shao, Feng, Zhang, Meng, Chen, and
  Zhou}]{shao-etal-2019-retrieving}
Shao, Chenze, Yang Feng, Jinchao Zhang, Fandong Meng, Xilin Chen, and Jie Zhou.
  2019.
\newblock Retrieving sequential information for non-autoregressive neural
  machine translation.
\newblock In \emph{Proceedings of the 57th Annual Meeting of the Association
  for Computational Linguistics}, pages 3013--3024, Association for
  Computational Linguistics, Florence, Italy.

\bibitem[{Shao et~al.(2020)Shao, Zhang, Feng, Meng, and
  Zhou}]{DBLP:conf/aaai/ShaoZFMZ20}
Shao, Chenze, Jinchao Zhang, Yang Feng, Fandong Meng, and Jie Zhou. 2020.
\newblock Minimizing the bag-of-ngrams difference for non-autoregressive neural
  machine translation.
\newblock In \emph{The Thirty-Fourth {AAAI} Conference on Artificial
  Intelligence, {AAAI} 2020, New York, NY, USA, February 7-12, 2020}, pages
  198--205, {AAAI} Press.

\bibitem[{Shen et~al.(2016)Shen, Cheng, He, He, Wu, Sun, and
  Liu}]{shen2016minimum}
Shen, Shiqi, Yong Cheng, Zhongjun He, Wei He, Hua Wu, Maosong Sun, and Yang
  Liu. 2016.
\newblock Minimum risk training for neural machine translation.
\newblock In \emph{Proceedings of the 54th Annual Meeting of the Association
  for Computational Linguistics (Volume 1: Long Papers)}, pages 1683--1692,
  Association for Computational Linguistics, Berlin, Germany.

\bibitem[{Shu et~al.(2020)Shu, Lee, Nakayama, and
  Cho}]{Shu2020LatentVariableNN}
Shu, Raphael, Jason Lee, Hideki Nakayama, and Kyunghyun Cho. 2020.
\newblock Latent-variable non-autoregressive neural machine translation with
  deterministic inference using a delta posterior.
\newblock In \emph{The Thirty-Fourth {AAAI} Conference on Artificial
  Intelligence, {AAAI} 2020, New York, NY, USA, February 7-12, 2020}, pages
  8846--8853, {AAAI} Press.

\bibitem[{Sun et~al.(2019)Sun, Li, Wang, He, Lin, and Deng}]{NIPS2019_8566}
Sun, Zhiqing, Zhuohan Li, Haoqing Wang, Di~He, Zi~Lin, and Zhihong Deng. 2019.
\newblock Fast structured decoding for sequence models.
\newblock In \emph{Advances in Neural Information Processing Systems 32}. pages
  3016--3026.

\bibitem[{Sun and Yang(2020)}]{pmlr-v119-sun20c}
Sun, Zhiqing and Yiming Yang. 2020.
\newblock An {EM} approach to non-autoregressive conditional sequence
  generation.
\newblock In \emph{Proceedings of the 37th International Conference on Machine
  Learning}, volume 119 of \emph{Proceedings of Machine Learning Research},
  pages 9249--9258, PMLR.

\bibitem[{Sutskever, Vinyals, and Le(2014)}]{sutskever2014sequence}
Sutskever, Ilya, Oriol Vinyals, and Quoc~V. Le. 2014.
\newblock Sequence to sequence learning with neural networks.
\newblock In \emph{Proceedings of the 27th International Conference on Neural
  Information Processing Systems - Volume 2}, NIPS'14, pages 3104--3112, MIT
  Press, Cambridge, MA, USA.

\bibitem[{Sutton et~al.(1999)Sutton, McAllester, Singh, and
  Mansour}]{sutton2000policy}
Sutton, Richard~S., David McAllester, Satinder Singh, and Yishay Mansour. 1999.
\newblock Policy gradient methods for reinforcement learning with function
  approximation.
\newblock In \emph{Proceedings of the 12th International Conference on Neural
  Information Processing Systems}, NIPS'99, pages 1057--1063, MIT Press,
  Cambridge, MA, USA.

\bibitem[{Sutton(1984)}]{sutton1984temporal}
Sutton, Richard~Stuart. 1984.
\newblock \emph{Temporal Credit Assignment in Reinforcement Learning}.
\newblock Ph.D. thesis.
\newblock AAI8410337.

\bibitem[{Tu et~al.(2020)Tu, Pang, Wiseman, and Gimpel}]{tu-etal-2020-engine}
Tu, Lifu, Richard~Yuanzhe Pang, Sam Wiseman, and Kevin Gimpel. 2020.
\newblock {ENGINE}: Energy-based inference networks for non-autoregressive
  machine translation.
\newblock In \emph{Proceedings of the 58th Annual Meeting of the Association
  for Computational Linguistics}, pages 2819--2826, Association for
  Computational Linguistics, Online.

\bibitem[{Vaswani et~al.(2017)Vaswani, Shazeer, Parmar, Uszkoreit, Jones,
  Gomez, Kaiser, and Polosukhin}]{vaswani2017attention}
Vaswani, Ashish, Noam Shazeer, Niki Parmar, Jakob Uszkoreit, Llion Jones,
  Aidan~N. Gomez, undefinedukasz Kaiser, and Illia Polosukhin. 2017.
\newblock Attention is all you need.
\newblock In \emph{Proceedings of the 31st International Conference on Neural
  Information Processing Systems}, NIPS'17, pages 6000--6010, Curran Associates
  Inc., Red Hook, NY, USA.

\bibitem[{Venkatraman, Hebert, and Bagnell(2015)}]{venkatraman2015improving}
Venkatraman, Arun, Martial Hebert, and J.~Andrew Bagnell. 2015.
\newblock Improving multi-step prediction of learned time series models.
\newblock In \emph{Proceedings of the Twenty-Ninth AAAI Conference on
  Artificial Intelligence}, AAAI'15, pages 3024--3030, AAAI Press.

\bibitem[{Wang, Zhang, and Chen(2018)}]{wang2018semi}
Wang, Chunqi, Ji~Zhang, and Haiqing Chen. 2018.
\newblock Semi-autoregressive neural machine translation.
\newblock In \emph{Proceedings of the 2018 Conference on Empirical Methods in
  Natural Language Processing}, pages 479--488, Association for Computational
  Linguistics, Brussels, Belgium.

\bibitem[{Wang et~al.(2019)Wang, Tian, He, Qin, Zhai, and Liu}]{wang2019non}
Wang, Yiren, Fei Tian, Di~He, Tao Qin, ChengXiang Zhai, and Tie{-}Yan Liu.
  2019.
\newblock Non-autoregressive machine translation with auxiliary regularization.
\newblock In \emph{The Thirty-Third {AAAI} Conference on Artificial
  Intelligence, {AAAI} 2019, Honolulu, Hawaii, USA, January 27 - February 1,
  2019}, pages 5377--5384, {AAAI} Press.

\bibitem[{Weaver and Tao(2001)}]{10.5555/647235.720252}
Weaver, Lex and Nigel Tao. 2001.
\newblock The optimal reward baseline for gradient-based reinforcement
  learning.
\newblock In \emph{Proceedings of the 17th Conference in Uncertainty in
  Artificial Intelligence}, UAI '01, pages 538--545, Morgan Kaufmann Publishers
  Inc., San Francisco, CA, USA.

\bibitem[{Wei et~al.(2019)Wei, Wang, Zhou, Lin, and
  Sun}]{wei-etal-2019-imitation}
Wei, Bingzhen, Mingxuan Wang, Hao Zhou, Junyang Lin, and Xu~Sun. 2019.
\newblock Imitation learning for non-autoregressive neural machine translation.
\newblock In \emph{Proceedings of the 57th Annual Meeting of the Association
  for Computational Linguistics}, pages 1304--1312, Association for
  Computational Linguistics, Florence, Italy.

\bibitem[{Williams(1992)}]{williams1992simple}
Williams, Ronald~J. 1992.
\newblock Simple statistical gradient-following algorithms for connectionist
  reinforcement learning.
\newblock \emph{Mach. Learn.}, 8(3--4):229--256.

\bibitem[{Williams and Zipser(1989)}]{williams1989learning}
Williams, Ronald~J. and David Zipser. 1989.
\newblock A learning algorithm for continually running fully recurrent neural
  networks.
\newblock \emph{Neural Comput.}, 1(2):270--280.

\bibitem[{Wu et~al.(2018{\natexlab{a}})Wu, Tian, Qin, Lai, and
  Liu}]{wu-etal-2018-study}
Wu, Lijun, Fei Tian, Tao Qin, Jianhuang Lai, and Tie-Yan Liu.
  2018{\natexlab{a}}.
\newblock A study of reinforcement learning for neural machine translation.
\newblock In \emph{Proceedings of the 2018 Conference on Empirical Methods in
  Natural Language Processing}, pages 3612--3621, Association for Computational
  Linguistics, Brussels, Belgium.

\bibitem[{Wu et~al.(2018{\natexlab{b}})Wu, Xia, Tian, Zhao, Qin, Lai, and
  Liu}]{pmlr-v95-wu18a}
Wu, Lijun, Yingce Xia, Fei Tian, Li~Zhao, Tao Qin, Jianhuang Lai, and Tie-Yan
  Liu. 2018{\natexlab{b}}.
\newblock Adversarial neural machine translation.
\newblock In \emph{Proceedings of The 10th Asian Conference on Machine
  Learning}, volume~95 of \emph{Proceedings of Machine Learning Research},
  pages 534--549, PMLR.

\bibitem[{Wu et~al.(2016)Wu, Schuster, Chen, Le, Norouzi, Macherey, Krikun,
  Cao, Gao, Macherey et~al.}]{wu2016google}
Wu, Yonghui, Mike Schuster, Zhifeng Chen, Quoc~V Le, Mohammad Norouzi, Wolfgang
  Macherey, Maxim Krikun, Yuan Cao, Qin Gao, Klaus Macherey, et~al. 2016.
\newblock Google's neural machine translation system: Bridging the gap between
  human and machine translation.
\newblock \emph{arXiv preprint arXiv:1609.08144}.

\bibitem[{Yang et~al.(2018)Yang, Chen, Wang, and Xu}]{yang-etal-2018-improving}
Yang, Zhen, Wei Chen, Feng Wang, and Bo~Xu. 2018.
\newblock Improving neural machine translation with conditional sequence
  generative adversarial nets.
\newblock In \emph{Proceedings of the 2018 Conference of the North {A}merican
  Chapter of the Association for Computational Linguistics: Human Language
  Technologies, Volume 1 (Long Papers)}, pages 1346--1355, Association for
  Computational Linguistics, New Orleans, Louisiana.

\bibitem[{Yu et~al.(2017)Yu, Zhang, Wang, and Yu}]{yu2017seqgan}
Yu, Lantao, Weinan Zhang, Jun Wang, and Yong Yu. 2017.
\newblock Seqgan: Sequence generative adversarial nets with policy gradient.
\newblock In \emph{Proceedings of the Thirty-First AAAI Conference on
  Artificial Intelligence}, AAAI'17, pages 2852--2858, AAAI Press.

\bibitem[{Zhang et~al.(2019)Zhang, Feng, Meng, You, and
  Liu}]{zhang-etal-2019-bridging}
Zhang, Wen, Yang Feng, Fandong Meng, Di~You, and Qun Liu. 2019.
\newblock Bridging the gap between training and inference for neural machine
  translation.
\newblock In \emph{Proceedings of the 57th Annual Meeting of the Association
  for Computational Linguistics}, pages 4334--4343, Association for
  Computational Linguistics, Florence, Italy.

\bibitem[{Zhou, Gu, and Neubig(2020)}]{Zhou2020Understanding}
Zhou, Chunting, Jiatao Gu, and Graham Neubig. 2020.
\newblock Understanding knowledge distillation in non-autoregressive machine
  translation.
\newblock In \emph{8th International Conference on Learning Representations,
  {ICLR} 2020, Addis Ababa, Ethiopia, April 26-30, 2020}.

\bibitem[{Zhou and Keung(2020)}]{zhou-keung-2020-improving}
Zhou, Jiawei and Phillip Keung. 2020.
\newblock Improving non-autoregressive neural machine translation with
  monolingual data.
\newblock In \emph{Proceedings of the 58th Annual Meeting of the Association
  for Computational Linguistics}, pages 1893--1898, Association for
  Computational Linguistics, Online.

\end{thebibliography}
\end{document}